\def\eqref#1{eq.~\ref{#1}}
\def\Eqref#1{Eq.~\ref{#1}}
\def\plaineqref#1{(\ref{#1})}
\def\1{\bm{1}}
\def\vb{{\bm{b}}}
\def\vc{{\bm{c}}}
\def\vd{{\bm{d}}}
\def\vh{{\bm{h}}}
\def\vk{{\bm{k}}}
\def\vl{{\bm{l}}}
\def\vm{{\bm{m}}}
\def\vu{{\bm{u}}}
\def\vv{{\bm{v}}}
\def\vw{{\bm{w}}}
\def\vx{{\bm{x}}}
\def\vy{{\bm{y}}}
\def\mB{{\bm{B}}}
\def\mC{{\bm{C}}}
\def\mF{{\bm{F}}}
\def\mR{{\bm{R}}}
\def\mS{{\bm{S}}}
\def\mU{{\bm{U}}}
\def\mV{{\bm{V}}}
\def\mW{{\bm{W}}}
\def\mY{{\bm{Y}}}
\def\mZ{{\bm{Z}}}
\DeclareMathAlphabet{\mathsfit}{\encodingdefault}{\sfdefault}{m}{sl}
\SetMathAlphabet{\mathsfit}{bold}{\encodingdefault}{\sfdefault}{bx}{n}
\def\gD{{\mathcal{D}}}
\def\gI{{\mathcal{I}}}
\def\gL{{\mathcal{L}}}
\def\sN{{\mathbb{N}}}
\newcommand{\R}{\mathbb{R}}
\title{Differentiable Learning of Generalized Structured Matrices for Efficient Deep Neural Networks}
\author{Changwoo Lee, Hun-Seok Kim  \\
University of Michigan, Ann Arbor, MI, USA \\
\texttt{\{cwoolee, hunseok\}@umich.edu} \\
}
\newcommand{\sinc}{\mathrm{sinc}}
\newcommand{\muk}{\vm_{\phi_k^R}}
\newcommand{\mvk}{\vm_{\phi_k^C}}
\newtheorem{theorem}{Theorem}
\newtheorem{corollary}[theorem]{Corollary}
\newtheorem{lemma}[theorem]{Lemma}
\newtheorem{definition}{Definition}
\begin{document}

\maketitle

\begin{abstract}
This paper investigates efficient deep neural networks (DNNs) to replace dense unstructured weight matrices with structured ones that possess desired properties.
The challenge arises because the optimal weight matrix structure in popular neural network models is obscure in most cases and may vary from layer to layer even in the same network.
Prior structured matrices proposed for efficient DNNs were mostly hand-crafted without a generalized framework to systematically learn them. 
To address this issue, we propose a generalized and differentiable framework to learn efficient structures of weight matrices by gradient descent. 
We first define a new class of structured matrices that covers a wide range of structured matrices in the literature by adjusting the structural parameters.
Then, the frequency-domain differentiable parameterization scheme based on the Gaussian-Dirichlet kernel is adopted to learn the structural parameters by proximal gradient descent.
On the image and language tasks, our method learns efficient DNNs with structured matrices, achieving lower complexity and/or higher performance than prior approaches that employ low-rank, block-sparse, or block-low-rank matrices.
\end{abstract}

\section{Introduction}

Deep Neural Networks (DNNs) for large language models (LLMs) \citep{vaswani2017attention,devlin2018bert,radford2019language,brown2020language} and vision tasks \citep{dosovitskiy2020image,touvron2021training} have shown great success in various domains in recent years. 
The size of the DNNs, however, has increased on an extraordinary scale -- up to 70 Billion parameters in the single model \cite{zhao2023survey} -- requiring an unprecedented amount of computing resources and energy to deploy the DNN-based services.
Fortunately, under certain conditions, the weight matrices of DNNs trained by stochastic gradient descent (SGD) naturally have well-defined preferred structures, such as low-rank matrices \citep{yaras2023law,huh2021low,gunasekar2017implicit}.
However, identifying such structures to lower the effective complexity of weight matrices in recent models such as Transformers \cite{vaswani2017attention} remains a challenging problem. 
Also, it mostly relies on the existing human-designed / hand-crafted structured matrices without a unified systematic approach.
Hence prior works have focused on investigating new classes of structured matrices for DNNs \citep{li2015butterfly,dao2022monarch,chen2022pixelated}.
Notably, each structured matrix is defined disjointedly from other formats. For example, neither the block-sparse matrix format nor the low-rank matrix format of the same number of parameters is a subset of another, and yet there is no unified representation that describes both well.
Moreover, the structure description and the implementation complexity of the structured matrix are in non-differentiable discrete spaces. 

In this paper, we investigate (locally) optimal structures of the weight matrices as well as a differentiable training method to learn them, attempting to answer the following two questions:
\begin{enumerate}
    \item \textit{Is there a universal format that represents a wide range of structured matrices?}
    \item \textit{Can the structure of such matrices be learned efficiently, if it exists?}
\end{enumerate}

\textbf{Contributions.} 
Tackling the above two questions, we introduce a \textit{generalized} and \textit{differentiable} structured matrix format. The main contributions of this work can be summarized as follows.

    1) We propose a \textbf{Generalized Block-low-rank} (GBLR) matrix format, which includes many important structures such as Low-Rank (LR), Block Sparse (BSP), and Block-low-rank (BLR) matrices under some practical conditions.
    The new structured matrix format consists of two types of parameters: one guiding the \textit{structure} of the matrix and the other specifying the \textit{content} or the values of matrix elements.
    We show that the LR, BSP and BLR formats are special cases of the GBLR matrix. We also show that the GBLR format is closed under the interpolation between existing GBLR matrices in the structural parameter space, which we believe is a strong evidence that the GBLR format is able to capture undiscovered structured matrix formats.
    
     2) We introduce a \textit{differentiable} parameterization of the structural parameters -- widths and locations of blocks -- of the GBLR format. 
    The structural parameters are defined in the \textit{frequency} domain, and are processed by the proposed \textbf{Gaussian-Dirichlet} (Gaudi) function followed by inverse Fast Fourier Transform (IFFT) to the format named \textit{Gaudi-GBLR}.
    We show that the derivatives of the Gaudi function with respect to the structural parameters exist almost everywhere, even when the width is zero.
    
    3) We propose a practical learning algorithm based on the proximal gradient descent to train compact neural networks with Gaudi-GBLR matrices.
    The proposed method is extensively evaluated on the Vision Transformers (ViT) \citep{dosovitskiy2020image} and MLP-Mixer \citep{tolstikhin2021mlp}, outperforming prior approaches using hand-designed structured matrices.

\section{Preliminaries}
\textbf{Notation.} We use $\odot$ to indicate the elementwise (Hadamard) product of two matrices. The imaginary unit is denoted by $\imath=\sqrt{-1}$. The normalized sinc function is defined by $\sinc(x)=\frac{\sin{\pi x}}{\pi x}$ where $\sinc(0):=1$.
The (element-wise) floor function is denoted by $\lfloor \cdot \rfloor$.
The index of elements in a matrix or vector starts from zero (instead of one), following the convention of \cite{cooley1965algorithm}.
Also, $\gI_{n}=\{0,1,\ldots,n\}$ denotes the index set from 0 to $n$.

\textbf{Assumption.} 
For simplicity, we assume the weights are square matrices. Extension to rectangular matrix cases is discussed in Appendix \ref{sec:app_rectangular}.

\subsection{Block-related Matrices.}
\begin{wrapfigure}{r}{0.55\linewidth}
    \centering
    \captionsetup{skip=0pt}
    \setlength{\intextsep}{0pt} 
    \setlength{\columnsep}{0pt} 
    \includegraphics[width=\linewidth]{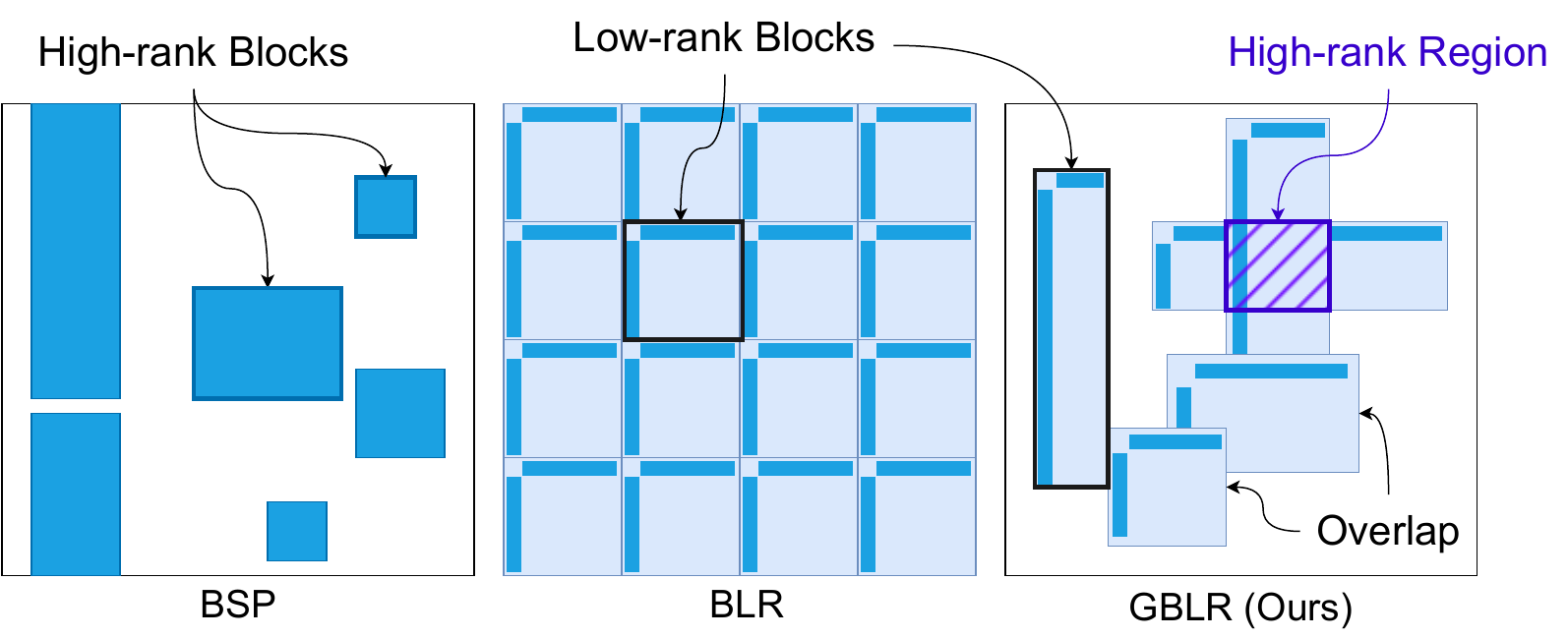}
    \caption{Comparison of block-sparse, block-low-rank, and our proposed Generalized block-low-rank matrices.}
    \label{fig:block-matrices}
\end{wrapfigure}
 A \textbf{block} $\mB\in\R^{|R|\times |C|}$ of a matrix $\mW\in\R^{n\times n}$ is a submatrix of $\mW$ with \textit{consecutive cyclic} row and column indices. For example, $R$ can be $\{n-2,n-1,0,1\}$ if $|R|=4$.
 Also, we say two blocks $\mB_1$ and $\mB_2$ \textbf{overlap} if they have shared elements of $\mW$. 

Two well-known block-structured matrices are a \textit{block sparse} (BSP) matrix and a \textit{block-low-rank} (BLR)\citep{amestoy2015improving} matrix.
Informally speaking, a matrix is block-sparse when non-zero elements are gathered in blocks and such blocks are sparse in the matrix. A block-low-rank matrix is composed of non-overlapping equally-partitioned low-rank blocks.
Figure \ref{fig:block-matrices} illustrates the BSP and BLR matrices as well as our proposed generalized block-low-rank matrix, which we introduce in Section \ref{sec:GBLR}.
We present the formal definitions of the block-sparse and block-low-rank matrices in Appendix \ref{sec:proofs}.

\subsection{Structured Matrix}
We say a matrix $\mW\in\R^{n\times n}$ is \textbf{structured} if, for any $\vx\in\R^n$, the matrix-vector product (MVP) $\vy=\mW\vx$ requires significantly less number of multiplications than $n^2$.
For instance,  LR,  BSP, and BLR matrices are structured because the number of multiplications for MVP is determined by their (low) rank or sparsity.
Although a general \textit{sparse} matrix reduces the complexity of MVP to be a sub-quadratic function of $n$, it is excluded in our discussion of structured matrices because processing moderately sparse matrices (10$\sim$50\% density) in conventional hardware such as GPUs does not reduce the actual run-time due to its unstructured positions of non-zero elements \citep{chen2022pixelated}.

\section{Proposed Method}
We are interested in training a deep neural network (DNN) $f$ under a \textit{computational cost} constraint:
\begin{align}\label{eq:general_problem_statement}
    \min_{f} \sum_{\vx,\vy\sim\gD}\mathcal{L}(f(\vx), \vy) \quad \text{s.t.}  \quad \mathrm{cost}(f) \le B,
\end{align}
where the first term is the cross-entropy loss for a classification task at a data point $\vx$ and a label $\vy$, and the constraint $\mathrm{cost}(f)$ is the number of multiplications to compute the neural network output $f(\vx)$.
Our method learns weight matrices of DNNs in a \textit{generalized} structured matrix format in a \textit{differentiable} manner.

\subsection{Generalized Block-Low-Rank (GBLR) Matrix} \label{sec:GBLR}
We introduce a concept of a generalized structured matrix format that explains multiple popular structure matrices used in DNN training.
The idea is that a block in a matrix can be expressed by a \textbf{sum of rank-1 blocks}, i.e., a rank-$r$ block is a sum of $r$ rank-1 blocks at the same position.
In this manner, LR, BSP, and BLR matrices can be expressed under a unified framework (Theorem \ref{thm:expressive}).

To be specific, our proposed structured matrix format is a generalized version of Block-Low-Rank (BLR) matrices. 
An  $n$-by-$n$ \textit{Generalized Block-Low-Rank} (GBLR) matrix $\mW$ is obtained by \textit{overlapping}  multiple rank-1 blocks of \textit{different} sizes at \textit{arbitrary} locations, as depicted in Figure \ref{fig:GBLR} (Left).
The locations of the blocks as well as their element values are learned \textit{simultaneously} from training data without explicit size or location restrictions. 

\begin{figure}[t]
    \centering
    \begin{center}
    \includegraphics[width=1.0\linewidth]{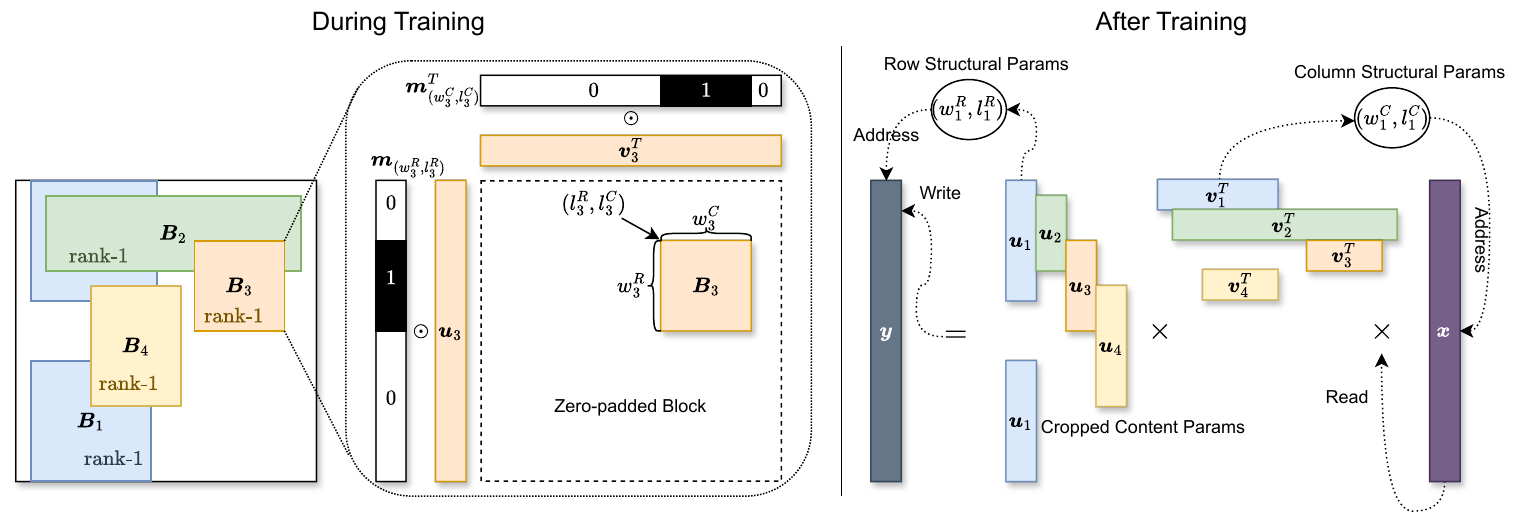}
\end{center}
    \caption{Left: An example of a GBLR matrix with 4 blocks. A block is generated from the \textit{structural} parameters $(w^R,l^R),(w^C,l^C)$ and the \textit{content} parameters $(\vu,\vv)$, where $(w^R,l^R)$ and $(w^C,l^C)$ form binary masks $\vm_{(w^R,l^R)}$ and $\vm_{(w^C,l^C)}$, respectively. Note that overlapped regions can have a rank higher than one. Right: Efficient Matrix-Vector Product computations using cropped content parameters and structural parameters. The structural parameters locate the input and output indices/addresses to read and write.}
    \label{fig:GBLR}
\end{figure}

Suppose there are $K$ blocks: $\mB_1\in\R^{w_1^R \times w_1^C}, \mB_2\in\R^{w_2^R \times w_2^C},\ldots,\mB_K\in\R^{w_K^R \times w_K^C}$.
Each block $\mB_k$ has two parameter sets: 1) the \textit{structural} parameters that identify the \textit{position} of the block in the row and column index set $\gI_{n-1}=\{0,1,\ldots,n-1\}$, and 2) the \textit{content} parameters which specify the actual values of matrix elements.

\textbf{Configuration of structural parameters. }
The position of a block is given by the indices of the rows and columns it occupies in the $n\times n$ matrix.
Hence, the placement of a rectangle block can be identified by four numbers: \textit{width} and \textit{location} in terms of the row or column indices.
Hence, we use a \textit{location} parameter $l$ and a \textit{width} parameter $w$ as the structural parameters of a block.
Figure \ref{fig:GBLR} (Left) illustrates a block of size $w^R\times w^C$ in an $n\times n$ matrix at location $(l^R,l^C)$. The row (column) index set of a block is the sequence of numbers from $l^R$ ($l^C$) to $l^R+w^R$ ($l^C+w^C$) where the addition is a cyclic/modulo addition.
For each block $\mB_k$ for $k=1,\ldots,K$, we have four parameters: $(w_k^R,l_k^R)$ for the row and $(w_k^C, l_k^C)$ for the column.
We use the notation $\phi_k^R=(w_k^R,l_k^R)$ and $\phi_k^C=(w_k^C,l_k^C)$ to represent the tuple of width and location for the row ($\phi_k^R$) and column ($\phi_k^C$). 

Based on the structural parameter $w_k^C$ and $l_k^C$, one can construct an $n$-dimensional binary mask that has  $w_k^C\in \gI_n$ consecutive ones starting from $l_k^C\in\gI_{n-1}$ in the cyclic order:
\begin{align}\label{eq:boxcar}
    m_{\phi_k^C}[j] = m_{(w_k^C,l_k^C)}[j]=\begin{cases}
        1 & \text{if } l_k^C \le j + an < l_k^C+w_k^C \\
        0 & \text{otherwise}
    \end{cases}, \: j\in \gI_{n-1}, \: a\in\{0,1\},
\end{align}
where $a\in\{0,1\}$ is necessary to make the order cyclic.
We call the mask in \Eqref{eq:boxcar} the \textit{boxcar} mask since the non-zero elements are located consecutively. The boxcar mask is used to select $w_k^C$ (cyclic) consecutive non-zero elements of an $n$-dimensional vector. 
The mask for the rows $\muk$ is obtained in the same way from $w_k^R$ and $l_k^R$. 

\textbf{Configuration of content parameters. }
To represent the values of a rank-1 block $\mB_k$, we use two $n$-dimensional vectors $\vu_k$ and $\vv_k$ as \textit{content} parameters along with the boxcar masks $\muk$, $\mvk$. All these parameters $(\phi_k^R, \phi_k^C, \vu_k, \vv_k)$ are learned during the DNN training \textit{simultaneously}.
Since the boxcar masks $\muk$ and $\mvk$ guide the location of the block in the $n\times n$ matrix, $\vu_k$ and $\vv_k$ are element-wise multiplied with the boxcar masks:
\begin{align*}
    \mathrm{ZeroPad}(\mB_k)=(\muk \odot \vu_k)(\mvk \odot \vv_k)^T,
\end{align*}
where the resulting $n\times n$ matrix is a zero-padded block.
Ideally, we expect the mask to expand / shrink and shift to find the right subset of the elements of a content parameter, while the content parameter updates the value of the elements selected by the mask.

Now we formally define the \textbf{Generalized Block-low-rank} (GBLR) format, which is the sum of $K$ zero-padded blocks:
\begin{align}
    \mW &= \sum_{k=1}^K (\muk\mvk^T) \odot (\vu_k \vv_k^T) = \sum_{k=1}^K \left(\muk \odot \vu_k\right) \left(\mvk \odot \vv_k\right)^T. \label{eq:GBLR} 
\end{align}
A GBLR matrix is associated with an average width $\bar{w}=\frac{1}{2K}\sum_{k=1}^K w_k^R + w_k^C$. 

\begin{definition}\label{def:GBLR}
Let $\mathsf{GBLR}(n,K,s)$ be the set of matrices obtained by \Eqref{eq:GBLR} for the average width less than or equal to $s\ge 0$, i.e., $\bar{w}=\frac{1}{2K}\sum_{k=1}^K w_k^R + w_k^C\le s$.
A matrix $\mW$ is an $(n,K,s)$-GBLR if $\mW \in \mathsf{GBLR}(n,K,s)$.
\end{definition}
We use the notation $\boldsymbol{\phi}(\mW):=(\vw(\mW), \vl(\mW))$ to indicate the collection of the structural parameters of $\mW$, where $\vw(\mW)=\{w_1^R, w_1^C, w_2^R, w_2^C,\ldots,w_K^R, w_K^C\}$ and $\vl(\mW)=\{l_1^R,l_1^C,l_2^R,l_2^C,\ldots,l_K^R,l_K^C\}$.
We simply use $\boldsymbol{\phi}:=\boldsymbol{\phi}(\mW)$ if $\mW$ is clearly inferred in the context.

\textbf{Efficiency.}  
Once the structural parameters are fixed, it is unwise to store and use two $n$-dimensional content parameters for each block $\mB_k$ because only $w_k^R$ elements of $\vu_k$ and $w_k^C$ elements of $\vv_k$ are non-zero according to the boxcar masks in \Eqref{eq:GBLR}.
Hence, one can store and use the \textit{cropped} content parameters $\vu_{l^R:l^R+w^R}$ and $\vv_{l^C:l^C+w^C}$ for MVP between an input $\vx\in\R^n$ (which can be also cropped from $l^C$ to $l^C+w^C$) and a block $\mB$, as described below and in Figure \ref{fig:GBLR} (Right):
\begin{align*}
    \mathrm{ZeroPad}(\mB)\vx&=(\vm_{(w^R,l^R)}\odot \vu)(\vm_{(w^C,l^C)}\odot \vv)^T\vx \\
    &= \mathrm{ZeroPad}\left( \vu_{l^R:l^R+w^R} (\vv_{l^C:l^C+w^C}^T\vx_{l^C:l^C+w^C})  \right),
\end{align*}
which requires only $w^R+w^C$ multiplications.
Hence, the number of multiplications (denoted by FLOPs) for multiplying $\mW \in \mathsf{GBLR}(n,K,s)$ with $\vx\in\R^n$ is bounded by $2Ks$:
\begin{align}\label{eq:efficiency}
    \text{FLOPs} = \sum_{k=1}^K (w_k^R + w_k^C) = 2K\bar{w} \le 2Ks.
\end{align}

\textbf{Expressiveness.}  
Low-rank (LR), block sparse (BSP), and block-low-rank (BLR) matrices are popular structured matrices in the DNN literature, and they are special cases of the GLBR matrix under mild conditions. Proofs and formal definitions of LR, BSP, and BLR matrices are in Appendix \ref{sec:proofs}
\begin{theorem}\label{thm:expressive}
    Let $n,K,s$ be positive integers satisfying $Ks\ge n$. Then any $n$-by-$n$ rank-$\frac{Ks}{n}$ matrices and $(n,\frac{K}{s},s)$-block-sparse matrices are $(n,K,s)$-GBLR.
    Also, any $(n,K,s,1)$-block-low-rank matrices are $(n,K,s)$-GBLR if $K=(n/s)^2$.
\end{theorem}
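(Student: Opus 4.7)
The plan is to handle the three inclusions separately, in each case constructing an explicit decomposition of the form \eqref{eq:GBLR} with exactly $K$ rank-1 summands and verifying the average-width bound. Throughout I would rely on the fact that a width parameter $w=0$ (equivalently, a zero content vector) is admissible, so I can always pad a shorter decomposition up to $K$ blocks without changing the matrix and without increasing $\sum_k (w_k^R+w_k^C)$.

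\textbf{Low-rank case.} Let $r=Ks/n$, which is a positive integer under the hypothesis $Ks\ge n$ (I would note the integrality assumption explicitly). Write a rank-$r$ matrix as $\mW=\sum_{i=1}^{r}\va_i\vb_i^T$. Interpret each outer product as a GBLR block with $w_i^R=w_i^C=n$ and $l_i^R=l_i^C=0$, so that the boxcar masks $\vm_{\phi_i^R}$ and $\vm_{\phi_i^C}$ are the all-ones vector and the content parameters are $\vu_i=\va_i$, $\vv_i=\vb_i$. Append $K-r$ trivial blocks with $w^R=w^C=0$ (contributing zero to $\mW$) to reach exactly $K$ blocks. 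The average width is
\begin{align*}
\bar w=\frac{1}{2K}\bigl(r\cdot 2n+(K-r)\cdot 0\bigr)=\frac{rn}{K}=s,
\end{align*}
so $\mW\in\mathsf{GBLR}(n,K,s)$.

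\textbf{Block-sparse case.} Using the appendix definition, an $(n,K/s,s)$-block-sparse matrix has $K/s$ non-overlapping blocks of size $s\times s$, each supported on some cyclic row range $R_j$ and column range $C_j$. Every such $s\times s$ block $\mB_j$ admits a rank decomposition $\mB_j=\sum_{t=1}^{s}\va_{j,t}\vb_{j,t}^T$ with $\va_{j,t},\vb_{j,t}\in\R^{s}$ (its rank is at most $s$). Zero-extend each $\va_{j,t},\vb_{j,t}$ to $\R^n$ supported on $R_j,C_j$ respectively, and set the structural parameters $w^R=w^C=s$, $l^R=\min R_j$, $l^C=\min C_j$. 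This yields $(K/s)\cdot s=K$ rank-1 GBLR blocks, and $\bar w = \frac{1}{2K}\cdot K\cdot 2s = s$.

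\textbf{Block-low-rank case.} When $K=(n/s)^2$, an $(n,K,s,1)$-BLR matrix tiles the $n\times n$ grid into $K$ non-overlapping $s\times s$ blocks, each of rank at most one. Each tile indexed by $(i,j)\in\{0,\dots,n/s-1\}^2$ is therefore already of the form $\vu\vv^T$ with $\vu,\vv\in\R^{s}$, and is placed at $l^R=is$, $l^C=js$, $w^R=w^C=s$. This gives exactly $K$ rank-1 GBLR blocks with $\bar w=s$.

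The only genuinely delicate point is bookkeeping: making sure the structural-parameter conventions (cyclic ranges, the index set $\gI_n$, the inclusion of $w=0$) are compatible with each of the three formal definitions in Appendix \ref{sec:proofs}, and that padding with zero-width blocks is a legitimate GBLR element rather than requiring $K$ to be inflated. Once those conventions are pinned down, each case reduces to the observation that a rank-$r$ submatrix decomposes into $r$ rank-1 terms supported on the same row/column window, so the argument is essentially an accounting exercise rather than a structural one.
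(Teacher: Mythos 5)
Your proposal is correct and follows essentially the same route as the paper's proof: full-width rank-1 blocks (padded with zero-width blocks) for the low-rank case, $s$ co-located rank-1 blocks per sparse block for the block-sparse case, and one rank-1 block per tile for the BLR case, with the same average-width accounting. You are merely more explicit than the paper about the integrality of $Ks/n$ and the legitimacy of zero-width padding, which is a welcome clarification rather than a deviation.
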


More importantly, a new structured matrix obtained by interpolating the structural parameters of two $(n,K,s)$-GBLR matrices is still $(n,K,s)$-GBLR, based on Theorem \ref{thm:interpolation}. Therefore, a new type of structured matrices can be derived from a set of GBLR matrices.
\begin{theorem}[Closed under structural interpolation]\label{thm:interpolation}
    Given two $n\times n$ matrices $\mW,\mZ\in\mathsf{GBLR}(n,K,s)$, and 
    $\alpha\in[0,1]$, consider the following combination between the structural parameters:
    \begin{align*}
        \vw' = \lfloor\alpha \vw(\mW) + (1-\alpha) \vw(\mZ)\rfloor, \quad  \vl' = \lfloor \alpha \vl(\mW) + (1-\alpha) \vl(\mZ) \rfloor.
    \end{align*}
    A matrix $\mY$ generated by \Eqref{eq:GBLR} with the structural parameter $(\vw', \vl')$ is a $(n,K,s)$-GBLR matrix, $\mY \in \mathsf{GBLR}(n,K,s)$.
\end{theorem}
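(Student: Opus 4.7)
The plan is to verify three conditions that $\mY$ must satisfy to be an $(n,K,s)$-GBLR matrix: (i) $\mY$ can be written as a sum of exactly $K$ zero-padded rank-one blocks via \Eqref{eq:GBLR}; (ii) the new structural parameters $(\vw', \vl')$ are admissible, meaning widths lie in $\gI_n$ and locations lie in $\gI_{n-1}$; and (iii) the average width $\bar w' := \frac{1}{2K}\sum_k (w_k^{R'} + w_k^{C'})$ satisfies $\bar w' \le s$.

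First I would dispatch (i) and (ii). Since the interpolation is componentwise on a pair of parameter tuples of the same cardinality, $(\vw', \vl')$ specifies exactly $K$ blocks, so condition (i) is immediate once we pick any content parameters $\{\vu_k,\vv_k\}_{k=1}^K$ (the definition of $\mathsf{GBLR}(n,K,s)$ only constrains the structural parameters). For (ii), each width in $\vw(\mW)$ and $\vw(\mZ)$ lies in $\gI_n$, so the convex combination $\alpha w_k^R(\mW) + (1-\alpha)\, w_k^R(\mZ)$ lies in $[0,n]$; applying $\lfloor\cdot\rfloor$ yields an integer in $\gI_n$. The analogous argument with locations in $\gI_{n-1}$ gives $l_k^{R'}, l_k^{C'}\in\gI_{n-1}$. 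Cyclicity of the row/column index sets is inherited automatically through the boxcar-mask construction in \Eqref{eq:boxcar}, so no additional reasoning is needed there.

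The substantive step is (iii), and it reduces to packaging the elementary bound $\lfloor x \rfloor \le x$ with linearity of the total width. For each $k$,
\begin{align*}
w_k^{R'} + w_k^{C'} \;\le\; \alpha\bigl(w_k^R(\mW)+w_k^C(\mW)\bigr) + (1-\alpha)\bigl(w_k^R(\mZ)+w_k^C(\mZ)\bigr).
\end{align*}
Summing over $k$, dividing by $2K$, and using the hypothesis $\bar w(\mW),\bar w(\mZ)\le s$ yields
\begin{align*}
\bar w' \;\le\; \alpha\,\bar w(\mW) + (1-\alpha)\,\bar w(\mZ) \;\le\; \alpha s + (1-\alpha)s \;=\; s,
\end{align*}
which is condition (iii). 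Combining (i)--(iii) gives $\mY \in \mathsf{GBLR}(n,K,s)$.

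I do not anticipate a genuine obstacle: the proof is essentially a one-line convexity argument preceded by verification that the floor operation preserves the admissible integer ranges. The only point worth flagging, to avoid confusion, is that the floor is applied \emph{after} the convex combination, so the inequality $\lfloor x\rfloor\le x$ is exactly the right tool; no tighter per-coordinate bound is needed because the constraint in Definition~\ref{def:GBLR} is on the average width rather than on individual block widths.
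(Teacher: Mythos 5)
Your proof is correct and follows essentially the same route as the paper's: bound each floored convex combination by $\lfloor x\rfloor \le x$, sum over blocks, and use linearity plus $\bar w(\mW),\bar w(\mZ)\le s$ to get $\bar w'\le s$. The extra bookkeeping you do in steps (i) and (ii) (block count and admissible integer ranges) is a harmless addition that the paper leaves implicit.
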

Theorem \ref{thm:expressive} and Theorem \ref{thm:interpolation} tell us that  $(n,K,s)$-GBLR matrices cover a wide range of popular existing structured matrices and also undiscovered ones.
In the following section, we introduce a differentiable tool to find/learn structured matrices in $\mathsf{GBLR}(n,K,s)$.

\subsection{Gaussian-Dirichlet (Gaudi) Mask}\label{sec:Gaudi}

The matrix structure in the GBLR format is determined by the width and location parameters. 
We aim to extract/learn these structural parameters from training data using stochastic gradient descent. 
In order to do so, the non-differentiability of the boxcar mask parameters needs to be handled.

To tackle this issue, we introduce a \textit{Dirichlet-kernel-based} parameterization of the boxcar to \textit{explicitly} parameterize the width $w$ and location $l$ in the expression. 
Consider a boxcar mask of length $n$, width $w$, and location $l$.
Let $\hat{\vm}_{(w,l)}$ be the discrete Fourier transform (DFT) of the mask $\vm_{(w,l)}$.
The frequency-domain representation of the boxcar mask is given by
\begin{align}
     \hat{m}_{(w,l)}[k] &=  e^{-2\pi \imath \frac{k}{n} l} \hat{m}_{(w,0)}[k] 
     = e^{-2\pi \imath \frac{k}{n} l} \sum_{j=0}^{w-1} e^{-2\pi\imath j\frac{k}{n}} = e^{-2\pi\imath \frac{k}{n} l} w\frac{\sinc\left(w\frac{k}{n}\right)}{\sinc\left(\frac{k}{n}\right)} e^{\imath \pi k \left(\frac{1-w}{n}\right)} \label{eq:dirichlet_kernel} \\
     &=e^{-2\pi \imath \frac{k}{n} l} d_w[k] \nonumber,
\end{align}
where $d_w[k] := w\frac{\sinc\left(w\frac{k}{n}\right)}{\sinc\left(\frac{k}{n}\right)} e^{\imath \pi k \left(\frac{1-w}{n}\right)}$ is the \textit{Dirichlet kernel of order $n$} \citep{bruckner1997real} in the discrete domain $\gI_{n-1}=\{0,1,\ldots,n-1\}$.

\begin{wrapfigure}{r}{0.4\linewidth}
    \captionsetup{skip=0pt}
    \setlength{\intextsep}{0pt} 
    \setlength{\columnsep}{0pt} 
    \centering\includegraphics[width=\linewidth]{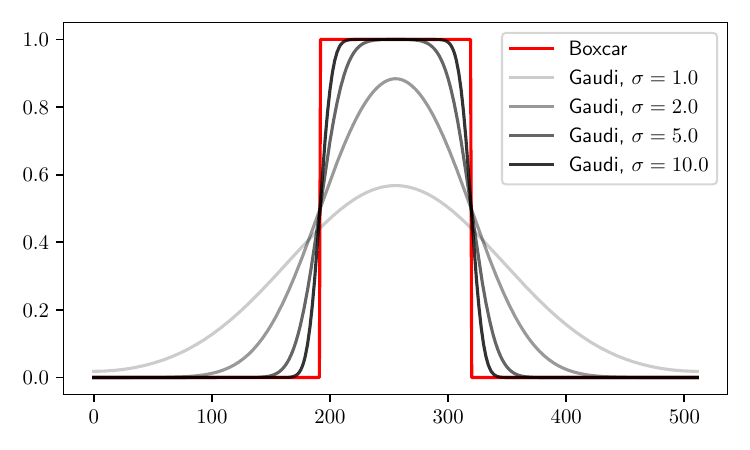}
    \caption{Comparison between Boxcar mask and Gaudi masks in the time domain with different smoothing factors $\sigma$. The Gaudi mask converges to the Boxcar mask as $\sigma$ grows.}
    \label{fig:GaudiMask}   
\end{wrapfigure}

Furthermore, we propose a smooth version of the Dirichlet-kernel-based mask by convolving the time-domain boxcar mask with the Gaussian-shape function. 
It is obtained in the frequency domain by element-wise multiplying the Gaussian function $ g_\sigma[k]=\exp\left(-\frac{k^2}{2\sigma^2}\right)$ with the standard deviation  $\sigma > 0$ to the Dirichlet kernel. We call the resulting function the \textbf{Gaussian-Dirichlet} (Gaudi) function $\mathbf{d}_{w}^\sigma$:
\begin{align}
     \mathrm{d}_{w}^\sigma [k] &:= g_\sigma[k] \cdot \mathrm{d}_{w}[k], \nonumber \\
    \tilde\vm_{(w,l)}^\sigma &:= \mathbf{IDFT}(\mathbf{d}_{w}^\sigma \cdot e^{-2\pi\imath \frac{\vk}{n} l}), \label{eq:gaudi}
\end{align}
where $\vk=[0,1,\ldots,n-1]$.
And we call the mask generated by a Gaudi function $\tilde\vm_{(w,l)}^\sigma$ the \textbf{Gaudi mask}, where the parameter $\sigma$ controls the smoothness. 
Note that as $\sigma\to\infty$, the Gaudi mask converges to the boxcar mask (Lemma \ref{lemma:converge_to_boxcar}).
Figure \ref{fig:GaudiMask} visualizes the time-domain Gaudi mask approaching the boxcar mask.

A useful property of the Gaudi mask is that one can obtain exact derivatives with respect to the width parameter, even when the width is zero.
To show it, we relax the domain of widths and locations to the continuous interval $[0,n]$ (see Appendix \ref{sec:app_gaudi_mask_cont_domain}).
\begin{theorem}\label{thm:bounded_grad}
    Let $n<\infty$ be a finite positive integer.
    For any $\sigma \in (0,\infty]$ and $w,l\in[0,n]$, the derivatives of the Gaudi mask $\tilde{\vm}_{(w,l)}^\sigma$ with respect to $w$ and $l$ are bounded almost everywhere:
    \begin{align*}
        \left\|\frac{\partial \tilde{\vm}_{(w,l)}^\sigma}{\partial w}\right\|_2 < \infty, \quad  \left\|\frac{\partial \tilde{\vm}_{(w,l)}^\sigma}{\partial l}\right\|_2 < \infty.
    \end{align*}
\end{theorem}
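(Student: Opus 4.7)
The plan is to pass to the frequency domain via Parseval's identity and then exploit the fact that, for finite $n$, the sum defining the (inverse) DFT involves only finitely many terms; hence it suffices to show that each coefficient $\hat{m}_{(w,l)}^\sigma[k] := g_\sigma(k)\, d_w[k]\, e^{-2\pi\imath kl/n}$ is continuously differentiable in $(w,l)$ on $[0,n]^2$ with partial derivatives of finite modulus. First I would invoke Parseval's identity for the unitary DFT (up to the standard $1/n$ normalization) to write
$$\left\|\frac{\partial \tilde{\vm}_{(w,l)}^\sigma}{\partial w}\right\|_2^2 \;=\; \frac{1}{n}\sum_{k=0}^{n-1} \left|\frac{\partial \hat{m}_{(w,l)}^\sigma[k]}{\partial w}\right|^2, \qquad \left\|\frac{\partial \tilde{\vm}_{(w,l)}^\sigma}{\partial l}\right\|_2^2 \;=\; \frac{1}{n}\sum_{k=0}^{n-1} \left|\frac{\partial \hat{m}_{(w,l)}^\sigma[k]}{\partial l}\right|^2,$$
where the interchange of differentiation with the finite IDFT sum is immediate.

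For the $l$-derivative, the dependence on $l$ is purely through an exponential, so
$$\frac{\partial \hat{m}_{(w,l)}^\sigma[k]}{\partial l} \;=\; -\frac{2\pi\imath k}{n}\, e^{-2\pi\imath kl/n}\, g_\sigma(k)\, d_w[k],$$
whose modulus is bounded by $(2\pi k/n)\,|g_\sigma(k)|\,|d_w[k]|$, a quantity depending only on $k,n,\sigma$. For the $w$-derivative, I would rewrite the Dirichlet kernel in the equivalent form $d_w[k] = \frac{n}{\pi k\,\sinc(k/n)}\sin(\pi w k/n)\, e^{\imath\pi k(1-w)/n}$ for $k\ne 0$, and $d_w[0] = w$ for $k=0$. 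Because $k\in\{1,\ldots,n-1\}$ implies $k/n\in(0,1)$, the denominator $\sinc(k/n)$ is bounded away from zero, so $d_w[k]$ is $C^\infty$ in $w$; a direct computation in which the $\sin$ and $\cos$ terms combine into a single complex exponential yields $\partial_w d_w[k] = \sinc(k/n)^{-1}\, e^{\imath\pi k/n - 2\imath\pi wk/n}$, whose modulus depends only on $k$ and $n$. Using $|g_\sigma(k)|\le 1$ for every $\sigma\in(0,\infty]$ (including the limit $\sigma=\infty$ where $g_\sigma\equiv 1$) and summing the squared moduli over the finite set $\gI_{n-1}$ then gives a finite bound independent of $(w,l)$, establishing the claim.

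The only delicate points are the edge cases. At $k=0$, the rewritten form has an apparent $1/k$ singularity, but the specialization $d_w[0]=w$ is smooth with derivative $1$, so nothing blows up; and at the boundary $w\in\{0,n\}$ of the relaxed continuous domain, $\sin(\pi wk/n)$ remains smooth in $w$ on all of $\R$, so the derivative still exists there. Consequently, the Gaudi mask derivatives in fact exist \emph{everywhere} on the compact rectangle $[0,n]^2$ (not merely almost everywhere) and are uniformly bounded, which yields the stated bound. I expect the main bookkeeping obstacle to be performing the algebraic cancellation in $\partial_w d_w[k]$ carefully so that the $1/k$ factor is absorbed \emph{before} differentiation, and then giving a unified treatment that handles the limiting case $\sigma=\infty$ (where the Gaussian damping disappears but finiteness of the index set still saves the sum, tying into Lemma \ref{lemma:converge_to_boxcar}).
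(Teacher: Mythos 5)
Your proposal is correct and follows essentially the same route as the paper: pass to the frequency domain via Parseval, compute the exact derivative of the Gaudi coefficients (arriving at the same closed form $\partial_w d_w[k]=\sinc(k/n)^{-1}e^{\imath\pi k/n}e^{-2\pi\imath wk/n}$ that the paper obtains via its sinc-derivative lemma, just by rewriting $w\,\sinc(wk/n)$ as $\tfrac{n}{\pi k}\sin(\pi wk/n)$ before differentiating), and bound the finite sum using $|g_\sigma(k)|\le 1$ and $\sinc(k/n)\ge\sinc((n-1)/n)>0$. Your explicit handling of $k=0$ and the observation that the derivative in fact exists and is uniformly bounded everywhere on $[0,n]^2$ are harmless refinements of the paper's argument.
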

Especially when $w=0$, the derivative with respect to $w$ is neither divergent nor zero.
\begin{corollary}\label{cor:nonzer_der_at_zero}
    For any $l\in[0,n]$ and $\sigma\in(0,\infty]$, the norm of the derivative of the Gaudi mask $\tilde{\vm}_{(w,l)}^\sigma$ with respect to $w$ at $w=0$ is well-defined and greater than zero, i.e., $0< \left\| \frac{\partial \tilde{\vm}_{(w,l)}^\sigma}{\partial w}\right\|_2 <\infty $.
\end{corollary}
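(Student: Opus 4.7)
The plan is to carry out the computation in the frequency domain, where $\tilde{\vm}_{(w,l)}^\sigma$ has a closed-form product structure by \Eqref{eq:gaudi}, differentiate coordinatewise at $w=0$, and then transfer the conclusion back to the time domain via Parseval's identity. Since the inverse DFT and multiplication by the $w$-independent vectors $g_\sigma(\vk)$ and $e^{-2\pi\imath \vk l/n}$ are linear operations, they commute with $\partial/\partial w$, giving
\begin{equation*}
\frac{\partial \tilde{\vm}_{(w,l)}^\sigma}{\partial w}\bigg|_{w=0} \;=\; \mathbf{IDFT}\!\left( g_\sigma(\vk)\cdot e^{-2\pi\imath \vk l/n} \cdot \left.\frac{\partial d_w}{\partial w}\right|_{w=0} \right).
\end{equation*}

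Next, I would differentiate $d_w[k]=w\,\sinc(wk/n)/\sinc(k/n)\cdot e^{\imath\pi k(1-w)/n}$ componentwise at $w=0$. For $k=0$ the expression collapses to $d_w[0]=w$, so the derivative is $1$. For $k\in\{1,\dots,n-1\}$, the product rule together with $\sinc(0)=1$ gives
\begin{equation*}
\left.\frac{\partial d_w[k]}{\partial w}\right|_{w=0} \;=\; \frac{e^{\imath\pi k/n}}{\sinc(k/n)},
\end{equation*}
whose modulus is $(\pi k/n)/|\sin(\pi k/n)|$, strictly positive for every $k\in\{1,\dots,n-1\}$. In particular every coordinate of the frequency-domain derivative is nonzero.

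Since the Gaussian weight satisfies $g_\sigma(k)>0$ for every finite $\sigma>0$ (and $g_\infty\equiv 1$ in the limiting case) and the phases $e^{-2\pi\imath k l/n}$ are of unit modulus, none of these nonzero magnitudes is killed after multiplication, so
\begin{equation*}
\left\| g_\sigma(\vk)\cdot e^{-2\pi\imath \vk l/n}\cdot \left.\frac{\partial d_w}{\partial w}\right|_{w=0} \right\|_2^2 \;=\; \sum_{k=0}^{n-1} g_\sigma(k)^2 \left|\left.\frac{\partial d_w[k]}{\partial w}\right|_{w=0}\right|^2 \;>\; 0.
\end{equation*}
Parseval's identity then guarantees that $\|\partial_w \tilde{\vm}_{(w,l)}^\sigma|_{w=0}\|_2$ equals a fixed constant times this strictly positive quantity, establishing the lower bound. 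Finiteness is inherited from Theorem \ref{thm:bounded_grad}, or more directly from the finite sum above, since $n$ is finite.

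The only subtlety, and the one I would flag as the main obstacle, is to make sure that the derivative genuinely exists at the single point $w=0$ rather than only in the almost-everywhere sense provided by Theorem \ref{thm:bounded_grad}. This is handled by noting that for each fixed $k$, $d_w[k]$ is an entire function of $w$ (a product of a monomial, an analytic $\sinc$, and an exponential), hence every coordinate of $\tilde{\vm}_{(w,l)}^\sigma$ is smooth in $w$ on all of $\sR$, and in particular differentiable at $w=0$; the coordinatewise differentiation performed above is therefore rigorously justified.
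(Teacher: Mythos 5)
Your proposal is correct and follows essentially the same route as the paper: the paper likewise computes the frequency-domain derivative $\left.\partial_w d_w^\sigma[k]\right|_{w=0}=e^{\pi\imath k/n}e^{-k^2/2\sigma^2}/\sinc(k/n)$ (via its Lemma on the Gaudi-function gradient), observes that every coordinate is nonzero, and concludes via the unitarity of the IDFT, which is the same step you perform with Parseval. Your added remark on analyticity of $w\mapsto d_w[k]$, justifying pointwise differentiability at $w=0$ rather than only almost everywhere, is a small but welcome tightening that the paper leaves implicit.
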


To allow learning the mask structural parameters in a differentiable manner, we plug the Gaudi mask (\Eqref{eq:gaudi}) into \Eqref{eq:GBLR} as the mask of GBLR matrices to model \textbf{Gaussian-Dirichlet GBLR} (Gaudi-GBLR) with parameters $\boldsymbol{\theta}=(\boldsymbol{\phi},\mU,\mV,\sigma)$:
\begin{align}
    \mW^{\boldsymbol{\theta}} = \mW^{(\boldsymbol{\phi}, \mU,\mV, \sigma)}  = \sum_{k=1}^K \left(\tilde{\vm}_{\phi_{k}^R} ^\sigma\odot \vu_k\right)\left(\tilde{\vm}_{\phi_{k}^C} ^\sigma\odot \vv_k\right)^T. \label{eq:Gaudi-GBLR}
\end{align}
In practice, one can use small $\sigma\approx 1$ at the beginning of the training process to update the corresponding (unmasked) content parameters which are more than necessary, then gradually increase $\sigma$ to adjust the content parameters with a tighter mask.
Since the purpose of using a Gaudi mask is to learn structural parameters of the GBLR matrix by \textit{Gradient Descent}, Gaudi-GBLR matrices are later replaced by GBLR matrices once the structural parameters are found/learned. To compute MVP using GBLR matrices, one can use the cropped content parameters and inputs, as we discussed in Section \ref{sec:GBLR}-efficiency, without constructing masks at all. Hence, during the inference, there is no overhead to compute Gaudi masks.

\subsection{Learning Gaudi-GBLR for Efficient Neural Networks}\label{sec:learning}
\begin{wrapfigure}{r}{0.40\linewidth}
\begin{minipage}{\linewidth}
   \begin{algorithm}[H]
    \caption{GBLR Learning by PGD}
    \label{algo:adapgd}
    \begin{algorithmic}[1]
         \Repeat
            \State $(\vx_1, \vy_1),\ldots,(\vx_M,\vy_M)\sim\gD$
            \State $\ell \gets \frac{1}{M}\sum_{i=1}^M\gL(f^{\boldsymbol{\theta}}(\vx_i), \vy_i)$
            \For{trainable $p\in\boldsymbol{\theta}$}
                \State $p \gets p - \eta \cdot \mathrm{AdamW}(\nabla \ell)$
            \EndFor
            \For{$k=1,2,\ldots,K$}
                \State $w_k \gets \mathrm{clip}_{0,n}(S_{\eta\lambda}(w_k))$
            \EndFor
        \Until{converge}       
    \end{algorithmic}
 \end{algorithm}      
\end{minipage}
\end{wrapfigure}
We now introduce an algorithm to learn the structural parameters of Gaudi-GBLR matrices. The goal of the learning algorithm is to identify Gaudi-GBLR matrix structures (i.e., their parameters) that allow computationally efficient DNNs.
Our discussion is centered around a two-layered multi-layer perceptron (MLP) for ease of understanding. However, the technique can be applied to general DNNs that incorporate linear layers.

Now let us consider a two-layered MLP  $f^{\boldsymbol{\theta}}$ with a Gaudi-GBLR weight matrix $\mW^{\boldsymbol{\theta}}$:
    $f^{\boldsymbol{\theta}} (\vx)= \vh^T a(\mW^{\boldsymbol{\theta}} \vx + \vb)$.
We initially relax the domain of $w\in\gI_n$ and $l\in\gI_{n-1}$ of the Gaudi mask to the real-valued space $w,l\in [0,n]$ as we discuss in Appendix \ref{sec:app_gaudi_mask_cont_domain}.
Due to the property of Gaudi-GBLR matrices \Eqref{eq:efficiency}, the computational cost constraint on the DNN $f^{\boldsymbol{\theta}}$ in Problem \plaineqref{eq:general_problem_statement} can be replaced by a constraint on the sum of the width parameters of $\mW^{\boldsymbol{\theta}}$. 
Specifically, we find the width parameters $\vw=\{w_1^R, w_1^C,\ldots,w_K^R,w_K^C\}$ of $\mW^{\boldsymbol{\theta}}$ satisfying $\|\vw\|_1\le B$ since $\sum_{k=1}^K w_k^R+w_k^C = \|\vw\|_1$.
To solve the problem with Gradient Descent, we relax this $\ell_1$-norm constrained problem to a \textit{unconstrained} one using the method of Lagrange multiplier:
\begin{align}\label{prob:unconstrained_problem_gaudi}
    \min_{\boldsymbol\theta} \sum_{(\vx,\vy)\sim \gD} \gL(f^{\boldsymbol\theta}(\vx),\vy) + \lambda \|\vw\|_1, \quad \lambda \ge 0. 
\end{align}
The resulting computational budget is implicitly constrained by a hyperparameter $\lambda \ge 0$.

Theorem \ref{thm:bounded_grad} guarantees the derivatives of the widths and locations of the Gaudi-GBLR matrix in $f^{\boldsymbol{\theta}}$ can be obtained with any positive smoothing parameter $\sigma > 0$ so that we can safely learn the parameters in the continuous domain $[0,n]$.
Specifically, we update the width parameter $\vw$ in the $\ell_1$-norm term in Problem \plaineqref{prob:unconstrained_problem_gaudi} by Proximal Gradient Descent (PGD):
\begin{align}\label{eq:PGD}
    \vw_{t+1} &=  \mathrm{clip}_{0,n}( S_{\eta \lambda}(\vw_{t} - \eta \nabla \gL(f^{\boldsymbol{\theta}}(\vx),\vy))),
\end{align}
where  $S_\mu(x)=\begin{cases}
    \mathrm{sign}(x) \cdot (|x|-\mu) & \text{if $x>\mu$} \\
    0 & \text{otherwise}
\end{cases}$ is the element-wise soft shrinkage function and $\mathrm{clip}_{0,n}(\cdot)$ clamps the elements of the inputs to $[0,n]$.
In practice, the gradient is calculated with an adaptive optimizer such as AdamW \citep{loshchilov2017decoupled} (see Line 5 in Algorithm \ref{algo:adapgd}).
The overall process is summarized in Algorithm \ref{algo:adapgd}.
Although Problem \plaineqref{prob:unconstrained_problem_gaudi} is non-linear, our experimental results show that PGD can attain good local optima with an adaptive optimizer and the initialization method we propose in Appendix \ref{sec:app_initialization}. A practical learning method for the width and location parameters defined in the discrete spaces $\gI_{n}$ and $\gI_{n-1}$ is discussed in Appendix \ref{sec:pgd_on_dnn}.

\section{Experiments}\label{sec:experiments}
We evaluate our proposed method by replacing the weight matrices of Vision Transformers (ViT) \citep{dosovitskiy2020image}, and MLP-Mixer \citep{tolstikhin2021mlp} with Gaudi-GBLR matrices.
For the experiment, we set the number of blocks $K$ equal to the number of columns of the matrix $n$.
We also evaluate alternative schemes for comparisons where the weights are replaced by popular hand-designed structured matrix formats such as Low-Rank (LR), Block-Sparse-plus-Low-Rank (BSP-LR), and Block-low-rank (BLR).
For LR matrices, we use singular vector decomposition to find the best rank-$s$ approximation of the given matrix for the predefined rank of $s$.
Pixelfly \citep{chen2022pixelated} and Monarch \citep{dao2022monarch} are schemes that use BSP-LR and BLR matrices respectively. We set the structural parameters of these alternative schemes to exhibit similar computational costs for MVP compared to our proposed scheme. Note that the structural parameter sets for LR, BSP-LR, and BLR do not change across different layers in the neural network. 
We denote the number of multiplications by FLOPs, and use 8 NVIDIA A40 GPUs in our experiments.
The detailed experimental settings are described in Appendix \ref{sec:experimental_details}. Our source code is available at \url{https://github.com/changwoolee/Gaudi-GBLR}.

\subsection{Fine-tuning Results}
\begin{wrapfigure}{r}{0.43\linewidth}
    \captionsetup{skip=0pt}
    \setlength{\intextsep}{0pt} 
    \setlength{\columnsep}{0pt} 
    \centering
    \includegraphics[width=\linewidth]{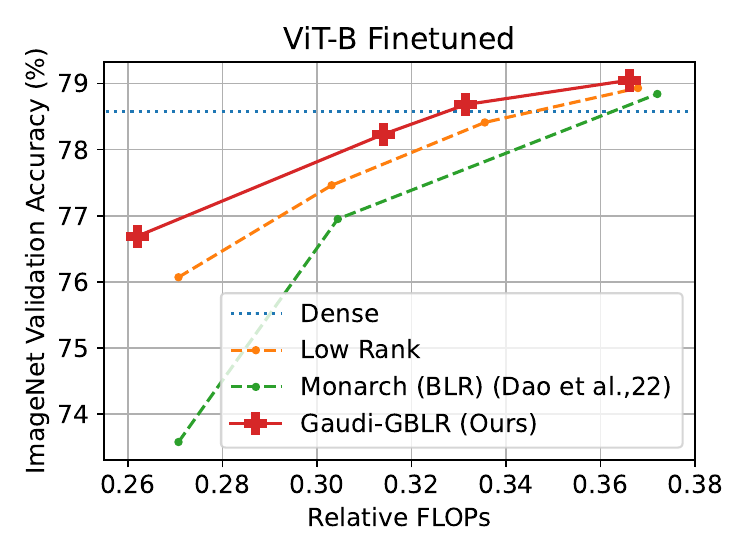}
    \caption{ImageNet accuracy after fine-tuning ViT-Base weights replaced with structured weight matrices. Dense: the original ViT-Base model.}
    \label{fig:imagenet-finetune-vit-b}
\end{wrapfigure}
\textbf{Vision Task.} We use the \textit{pre-trained} weights of the ViT-Base on ImageNet and initialize the parameters of the Gaudi-GBLR matrices by Algorithm \ref{algo:initialization} in Section \ref{sec:app_initialization}.
The ViTs with Gaudi-GBLR matrices were fine-tuned on ImageNet \citep{ILSVRC15}.
During the initialization, we set the computational budget of all matrices in the network the same.
For a fair comparison, the same set of hyperparameters was used throughout our fine-tuning experiments.

The highest accuracy is achieved by Gaudi-GBLR in ViT-Base with a patch size of $16\times 16$ on the ImageNet validation dataset after fine-tuning it for 35 epochs.
Figure \ref{fig:imagenet-finetune-vit-b} shows that Gaudi-GBLR preserves the accuracy well when the complexity is reduced to 30\% of the original `Dense' model which does not use structured matrices (its FLOPs count is normalized to 1). The other hand-designed approaches exhibit more significant accuracy degradations for the same complexity reduction.
Overall, Gaudi-GBLR strikes better Accuracy-FLOPs trade-offs than LR or Monarch approaches.
The higher accuracy for a similar FLOPs count quantifies the gain from the learned structured matrices.
In addition to ViT, we also tested GBLR matrices on ResNet \citep{he2016deep}. Readers interested in further details can find the results in Appendix \ref{sec:extended_exp_results}.

\begin{wraptable}{r}{0.43\linewidth}
    \centering
    \caption{Perplexity by weight type of GPT-2 after fine-tuning on WikiText103.}
    \label{tab:gpt2}
    \resizebox{0.95\linewidth}{!}{
    \begin{tabular}{|l|r|r|}
    \hline
    Weight Type& Perplexity ($\downarrow$) & Relative FLOPs \\ \hline
    Dense      & 19.36                     & 100\%          \\ \hline
    Low Rank   & 19.48                     & 43.75\%        \\
    Monarch    & 20.56                     & 43.75\%        \\ 
    Gaudi-GBLR & \textbf{19.24}            & \textbf{43.7}\%         \\ \hline
    \end{tabular}}
\end{wraptable}

\textbf{Language Task. }
We test the GBLR matrix to the weights of the pre-trained GPT-2 \citep{radford2019language}.
We compare our method to LR and BLR (Monarch) matrices. We evaluated the perplexity of WikiText103 \citep{merity2016pointer} validation set after 10 epochs of fine-tuning on the training set. The perplexity of each model is in Table \ref{tab:gpt2}.
Our method achieved the lowest perplexity, outperforming the dense baseline and utilizing the smallest number of FLOPs. 

\subsection{Training From Scratch} \label{sec:train_from_scratch}
We train ViTs \citep{dosovitskiy2020image} and MLP-Mixers \citep{tolstikhin2021mlp} with structured weight matrices on CIFAR-10\&100 \citep{krizhevsky2009learning} by Algorithm \ref{algo:adapgd} from randomly-initialized content parameters. We set $\sigma=1$ for the first epoch and gradually increase it to $100$ until the training process ends (see Appendix \ref{sec:experimental_details}). 
In Figure \ref{fig:cifar-scratch}, we study the accuracy-FLOPs trade-off using CIFAR10/100 datasets when the models are trained from scratch (i.e., not fine-tuned from pre-trained weights). As in the ImageNet fine-tuning experiment, Gaudi-GBLR achieves superior accuracy-FLOPs trade-offs outperforming the other hand-designed structured matrices.

\begin{figure}[t]
    \centering
    \includegraphics[width=0.9\linewidth]{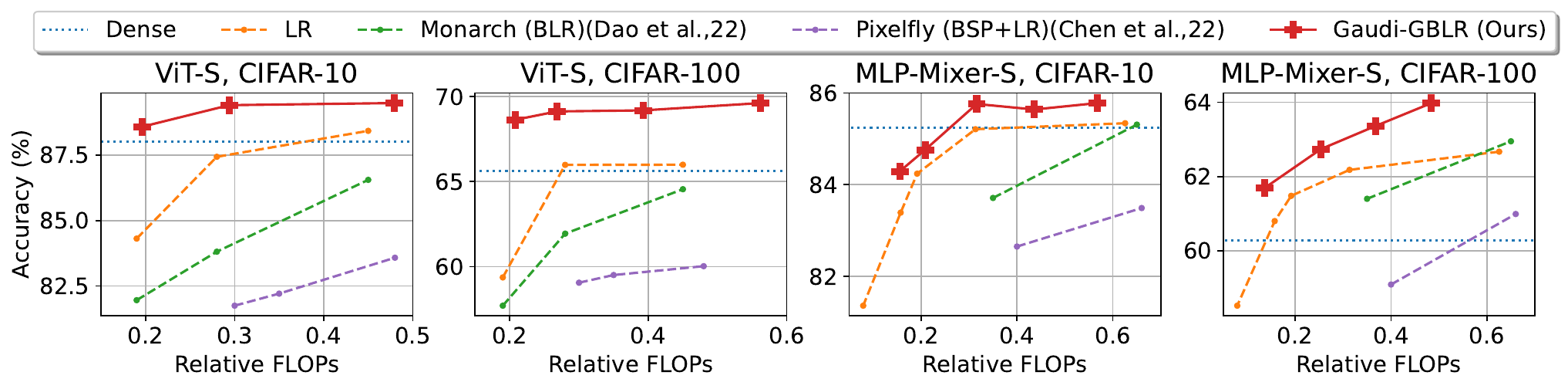}
    \caption{Accuracy-Cost trade-off of models trained from scratch on CIFAR-10/100 dataset.}
    \label{fig:cifar-scratch}
\end{figure}

\subsection{Analysis on Learned Gaudi-GBLR Matrices}\label{sec:analysis_on_gblr}

In this section, we study the learned Gaudi-GBLR matrices in terms of computational budget allocation, and mask patterns.
The analysis is based on ViT-Base Gaudi-GBLR matrices trained on ImageNet from scratch by following the same $\sigma$ adaptation rule used in CIFAR10/100 experiments.
The accuracy and FLOPs of the ViT-Base with Gaudi-GBLR is reported in Table \ref{tab:imagenet_from_scratch}.

\begin{figure}
    \centering
    \begin{minipage}{0.4\textwidth}
        \begin{table}[H]
      \caption{\small ImageNet accuracies and GFLOPs on a $224\times 224$ image of ViT-Base models trained from scratch with dense matrices and Gaudi-GBLR matrices. }
    \label{tab:imagenet_from_scratch}
    \centering
    \resizebox{\linewidth}{!}{
    \begin{tabular}{|l|r|r|r|}
    \hline
    Model     & Acc. (\%) & GFLOPs \\ \hline
    ViT-Base    & 78.57 & 17.2  \\ \hline
    w/ Gaudi-GBLR & 78.51 & 5.65  \\ 
    \hline
    \end{tabular}}   
    \vspace{0.5em}
            \caption{\small FLOPs by type of linear layers with Gaudi-GBLR matrix of ViT-Base trained on ImageNet. Units in $10^3$ FLOPs.}
        \label{tab:flops-by-type}
        \centering
        \resizebox{\linewidth}{!}{%
        \begin{tabular}{|l|rrr|}
        \hline
        Layer Type & Min & Max & Avg  \\ \hline
        Query       & 4.2 & 67.6 & 29.0                            \\ \hline
        Key         & 11.6&87.6 &42.1                           \\ \hline
        Value       & 34.2&135.5 &95.0                          \\ \hline
        MLP-FC1     & 359.8 & 910.2 &597.7 \\ \hline
        MLP-FC2     & 349.4 & 1,175.2 &756.6 \\ \hline
        \end{tabular}%
        }           
        \end{table}
    \end{minipage}
    \qquad
    \begin{minipage}{0.45\textwidth}
    \centering
    \includegraphics[width=\linewidth]{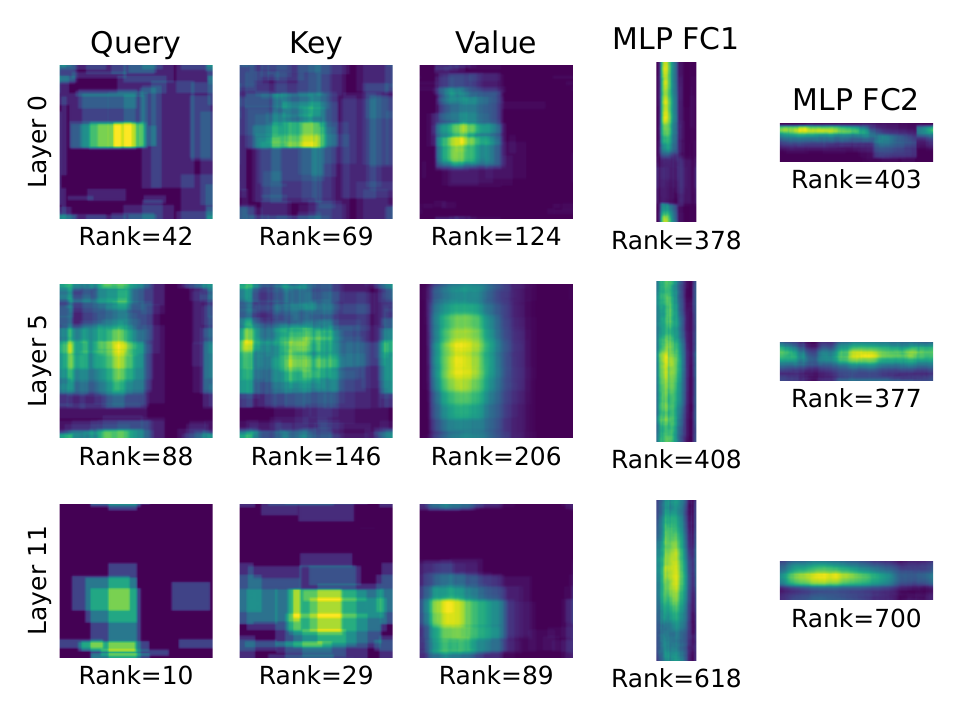}
    \caption{Mask patterns of weights for Query, Key, Value, FC1 and FC2 in Layer 0, 5, and 11 (out of $0\sim11$). The brighter, the more overlapped blocks. The numbers below indicate the rank of the matrix (max: 768).}
    \label{fig:mask-visualization}              
    \end{minipage}
\end{figure}

\textbf{Learned Computational Budget Allocation. } 
The proposed learning framework automatically allocates the computational budget to all linear layers of ViT-Base during the training process given by Algorithm \ref{algo:adapgd}.  
The algorithm finds a well-balanced (not necessarily equal) allocation for each matrix to meet the overall cost budget while minimizing the loss function. As a result, Gaudi-GBLR weight matrices in the network have unequal widths requiring different FLOPs for each MVP. Table \ref{tab:flops-by-type} summarizes the min/max/average FLOPs statistics collected for different types of linear layers. 
Within an Attention layer, the weights for \textit{Values} have the highest budget whereas \textit{Queries} and \textit{Keys} use a smaller budget. The smallest layer uses only about 4,200 FLOPs for MVP involving a $768\times 768$ matrix and a $768\times 1$ vector.
The FLOPs assigned to the linear layer of channel MLPs (\textit{MLP-FC1} and \textit{MLP-FC2} in Table \ref{tab:flops-by-type}) vary significantly. 
Although the size of the weight matrix used in the MLPs are $4\times$ larger than the ones used for Value, the \textit{MLP-FC2}-type layers use $7.96\times$ FLOPs than the \textit{Value}-type layers in average.

\textbf{Visualization. }Figure \ref{fig:mask-visualization} visualizes the locations of the blocks in exemplary Gaudi-GBLR weight matrices of ViT-Base trained from scratch. We select the first, middle, and the last layers of different types: linear layers for Query, Key, Values in Attention modules, and two linear layers (FC1 and FC2) in MLPs. Bright colors in Figure \ref{fig:mask-visualization} highlight regions where masks are overlapped. 
Interestingly, the resulting matrix is neither BSP nor BLR. It is observed that blocks are concentrated in a small number of regions.
We believe this is related to the Multi-Head \citep{vaswani2017attention} scheme of the ViT. 
Each weight matrix of an attention layer is a collection of weights for multiple heads. Because some heads are more important than others, more blocks are allocated to those regions.
Notice the rank of matrices obtained from the GBLR framework differs significantly across different layers and matrix types (Values, Queries, and Keys).

\section{Related Works}

\textbf{Structured Matrices for DNNs. }
Prior works have focused on manually designing suitably structured matrices for DNNs since explaining the structure of every weight matrix in practical DNNs such as Transformers \citep{vaswani2017attention} is challenging. 
\citet{hsu2022language} used weighted low-rank decomposition for the weights of language models. 
Butterfly matrices \citep{li2015butterfly,dao2019learning} were adopted in the form of Block-Sparse (BSP) format (Pixelfly) by \citet{chen2022pixelated}, and also in the form of Block-low-rank (BLR) format (Monarch) by \citet{dao2022monarch}.
In contrast, our method learns the structure of weight matrices from the training data.
\textbf{Layer-wise Low-Rank DNN Compression} has been studied in terms of Augmented Lagrangian Multiplier \citep{idelbayev2020low}, subspace grouping \citep{liebenwein2021compressing}, and quadratic programming \citep{li2018constrained}. However, they work only on the low-rank format, and involve multiple loops and SVD at every iteration, whereas ours can find layer-wise structure and budget in a single loop.

\textbf{Mask Learning.}
For neuron pruning/selection, a binary mask with surrogate gradient has been adopted to learn its pattern.
 \citet{jang2016categorical} and \citet{maddison2016concrete} propose alternative distributions close to the Bernoulli distribution, which allow the continuous gradient.
Movement Pruning \citep{sanh2020movement} utilizes Straight-Through Estimator (STE) \citep{hinton2012neural,bengio2013estimating,zhou2016dorefa} to pass the gradient through the Top-$k$ function.
\citet{lin2017runtime} adopts deep reinforcement learning to select the input-dependent mask.
On the contrary, our mask design solves the non-existing gradient problem by Gaudi-function-based parameterization in the frequency domain.

\textbf{One-shot Neural Architecture Search. } 
Neural Architecture Search (NAS) \citep{zoph2016neural,liu2018darts} seeks the optimal neural network structures from training data. To improve search efficiency, \citet{pham2018efficient,liu2018darts} adopt the one-shot NAS technique that selects sub-network candidates from a super-network.
Our method also falls into a similar category of finding a small-sized neural network in the scope of a structured matrix format. 

\textbf{Frequency-domain Learning.}
DiffStride \citep{riad2022learning} learns a stride of the \textit{pooling} operation for images by cropping a rectangular region of the frequency response of an image.
They utilize an approximated boxcar mask for a differentiable stride.
Although DiffStride shares similar components with Gaudi masks, the fundamental difference is in the design of the mask. We parameterize the mask in the \textit{frequency} domain where widths and locations inherit the exact gradient.

\section{Conclusion}

We propose a generalized and differentiable framework for learning structured matrice for efficient neural networks by gradient descent.
We introduce a new generalized format of structured matrices and parameterize the structure in the frequency domain by the Gaussian-Dirichlet (Gaudi) function with a well-defined gradient.
Effective learning algorithms are provided for our framework showing flexibility and differentiability to find expressive and efficient structures from training data in an end-to-end manner. Evaluation results show that the proposed framework provides the most efficient and accurate neural network models compared to other hand-designed popular structured matrices.

\section{Reproducibility Statement}
The authors make the following efforts for reproducibility: 1) We release our code at \url{https://github.com/changwoolee/Gaudi-GBLR}, 2) we provide the detailed settings and hyperparameters in Section \ref{sec:experiments}, \ref{sec:pgd_on_dnn}, and \ref{sec:experimental_details}, and 3) the proofs of all theorems, lemmas, and corollaries are presented in Section \ref{sec:proofs}.

\section*{Acknowledgment}

We thank Sara Shoouri, Pierre Abillama, Andrea Bejarano-Carbo, and Mingyu Yang for the insightful feedback on the paper.
This work was supported in part by COGNISENSE, one of seven centers in JUMP 2.0, a Semiconductor Research Corporation (SRC) program sponsored by DARPA.

\bibliography{iclr2024_conference}
\bibliographystyle{iclr2024_conference}

\newpage
\appendix
\section{Appendix}

\subsection{Proofs} \label{sec:proofs}

In this section, we provide the missing definitions, propositions, and proofs.
The original theorems are restated for completeness.

\subsubsection{Definitions}
We introduce formal definitions of the block-related matrices discussed in our paper.
\begin{definition}[Block-sparse matrix]
An $n$-by-$n$ matrix is $(n,m,s)$-\textbf{block-sparse} (BSP) if it contains $m$ non-overlapping non-zero \textit{blocks} whose dimension is at most $s\times s$.
\end{definition}
A $(n,m,s)$-block-sparse matrix may contain blocks of different sizes, but each of them has maximum $s\times s$ elements.

\begin{definition}[Block-low-rank matrix  \citep{amestoy2015improving,jeannerod2019improving}]
    
An $n$-by-$n$ matrix is $(n,m,s,r)$-\textbf{block-low-rank} (BLR) if, when it is equally partitioned into $m=p\times p$ non-overlapping blocks of dimension $s\times s$, every block has a rank at most $r \ll \frac{n}{p}$. 
\end{definition}
The $(n,m,s,r)$-BLR matrix has blocks of the same size that tile the entire matrix.

\newtheorem*{theorem1}{Theorem 1}

\subsubsection{Proof of Theorem \ref{thm:expressive}}
\begin{theorem1}\label{thm_appendix:expressive}
    Let $n,K,s$ be positive integers satisfying $Ks\ge n$. Then any $n$-by-$n$ rank-$\frac{Ks}{n}$ matrices and $(n,\frac{K}{s},s)$-block-sparse matrices are $(n,K,s)$-GBLR.
    Also, any $(n,K,s,1)$-block-low-rank matrices are $(n,K,s)$-GBLR if $K=(n/s)^2$.
\end{theorem1}
\begin{proof}
    It is sufficient to find the structural parameters of an $(n,K,s)$-GBLR matrix for each structured matrix format.
    Let $\vw^R, \vw^C\in\gI_n^K$ be the width parameters of rows and columns of the $(n,K,s)$-GBLR matrix, respectively, and $\vl^R, \vl^C \in \gI_{n-1}^K$ be the locations parameters of rows and columns, respectively, where $\gI_{n}=\{0,1,\ldots,n\}$.
    
    \textbf{Low Rank Matrix. } The rank-$\frac{Ks}{n}$ matrix is equivalent to the GBLR matrix with $Ks/n$ full-sized rank-1 blocks, i.e., $\vw^R=\vw^C=[\underbrace{n,n,\ldots,n}_{Ks/n},0,0,\ldots,0]$ and $\vl^R=\vl^C=\boldsymbol{0}$.

    \textbf{Block Sparse Matrix. }  
    A block of the $(n,\frac{K}{s},s)$-block-sparse matrix can have full rank.
    Consider a rank-1 block $\mB$ of a $(n,K,s)$-GBLR matrix with some location parameters where the sum of the row width and the column width is $2s$.
    The maximum dimension of the block $\mB$ is $s\times s$.
    Then clone and overlap the identical block with the same location and width parameters\textit{ $s$ times }to form a \textit{full-rank} block of the dimension at most $s\times s$.
    Since the full-rank block can be generated $K/s$ times, any $(n,K/s,s)$-block-sparse matrix can be modeled by the corresponding width parameters and the location parameters.

    \textbf{Block Low rank Matrix. } Since $K=\frac{n^2}{s^2}$, let us divide the $n$-by-$n$ matrix into $(n/s)^2$ blocks of the dimension of $s$-by-$s$. Assign the width and the location parameters correspondingly. The resulting matrix forms the $(n,K,s,1)$-block-low-rank matrix.
\end{proof}

\subsubsection{Proof of Theorem \ref{thm:interpolation}}
\newtheorem*{theorem2}{Theorem 2}
\begin{theorem2}[Closed under structural interpolation]\label{thm_appendix:interpolation}
    Given two $n\times n$ matrices $\mW,\mZ\in\mathsf{GBLR}(n,K,s)$, and 
    $\alpha\in[0,1]$, consider the following combination between the structural parameters:
    \begin{align*}
        \vw' = \lfloor\alpha \vw(\mW) + (1-\alpha) \vw(\mZ)\rfloor, \quad  \vl' = \lfloor \alpha \vl(\mW) + (1-\alpha) \vl(\mZ) \rfloor.
    \end{align*}
    A matrix $\mY$ generated by \Eqref{eq:GBLR} with the structural parameter $(\vw', \vl')$ is a $(n,K,s)$-GBLR matrix, $\mY \in \mathsf{GBLR}(n,K,s)$.
\end{theorem2}
\begin{proof}
    Let ${w'}_k^{R}$ and ${w'}_k^C$ be the width of the row and the column of the $k$th block of $\mY$, respectively. 
    We use the same style of notation for $w(\mW)_k^R, w(\mW)_k^C$ and $w(\mZ)_k^R, w(\mZ)_k^C$ to denote the row and column width of the $k$th block of $\mW$ and $\mZ$.
    The interpolated matrix $\mY$ is $(n,K,s)$-GBLR if and only if the mean of the elements of $\vw'$ is less than or equal to $s$, or equivalently the sum of the elements is less than or equal to $2Ks$.
    Also, both $\mW$ and $\mZ$ have $2Ks$ sum of the width parameters.
    From this, the sum of the interpolated width is less than or equal to $2Ks$:
    \begin{align*}
        \sum_{i} \vw'_i &=  \sum_{k=1}^K {w'}_k^R + {w'}_k^C \\ 
                        &= \sum_{k=1}^K  \lfloor \alpha w(\mW)_k^R + (1-\alpha) w(\mZ)_k^R \rfloor + \lfloor \alpha w(\mW)_k^C + (1-\alpha) w(\mZ)_k^C\rfloor  \\
                        &\le \sum_{k=1}^K   \alpha w(\mW)_k^R + (1-\alpha) w(\mZ)_k^R  +  \alpha w(\mW)_k^C + (1-\alpha) w(\mZ)_k^C  \\
                        &= \alpha \sum_{k=1}^K (w(\mW)_k^R + w(\mW)_k^C) + (1-\alpha) \sum_{k=1}^K(w(\mW)_k^R + w(\mW)_k^C) \\
                        &\le \alpha 2Ks + (1-\alpha) 2Ks \\
                        &= 2Ks.
    \end{align*}    
    Hence the interpolated matrix $\mY \in \mathsf{GBLR}(n,K,s)$.
\end{proof}

\subsubsection{Proof of Lemma \ref{lemma:converge_to_boxcar}}
\begin{lemma}\label{lemma:converge_to_boxcar}
Let $n\in\sN$ be a positive integer. Let $l\in\gI_{n-1}$ and $w\in\gI_n$. 
Consider the discrete boxcar mask $\vm_{(w,l)}$ with the structural parameter $(w,l)$ as in \Eqref{eq:boxcar} and the Gaudi mask $\tilde{\vm}_{(w,l)}^\sigma$ with the same parameters and $\sigma > 0$ as in \Eqref{eq:gaudi}.
Then $\tilde{\vm}_{(w,l)}^\sigma$ converges to $\vm_{(w,l)}$ as $\sigma \to \infty$, where the bound between two at $\sigma$ is given as follows:
\begin{align*}
 \frac{\|\vm_{(w,l)} - \tilde\vm_{(w,l)}^\sigma\|_2}{\|\vm_{(w,l)}\|_2}  \le 1-e^{-\frac{(1-n^{-1})^2}{2\sigma^2}}.
\end{align*}
\end{lemma}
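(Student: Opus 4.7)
The plan is to move to the frequency domain and exploit Parseval's identity, since the Gaudi mask is defined as the boxcar mask multiplied pointwise by a Gaussian envelope in frequency. The key observation is that the relative error between the two masks equals a weighted average of $(1-g_\sigma[k])^2$ against $|\hat m_{(w,l)}[k]|^2/\|\hat m_{(w,l)}\|_2^2$, and this weighted average can be controlled by the uniform maximum of $(1-g_\sigma[k])^2$ over the discrete frequency grid.

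First, by construction $\tilde{\vm}_{(w,l)}^\sigma = \mathrm{IDFT}(\hat{\vm}_{(w,l)} \odot \vg_\sigma)$, so its DFT is $\hat{\vm}_{(w,l)} \odot \vg_\sigma$ and consequently
\begin{align*}
\widehat{\vm_{(w,l)} - \tilde{\vm}_{(w,l)}^\sigma}[k] = \hat m_{(w,l)}[k]\bigl(1 - g_\sigma[k]\bigr).
\end{align*}
Applying Parseval's theorem (with the unnormalized DFT convention used in the paper) gives
\begin{align*}
\bigl\|\vm_{(w,l)} - \tilde{\vm}_{(w,l)}^\sigma\bigr\|_2^2 \;=\; \tfrac{1}{n}\sum_{k=0}^{n-1}\bigl|\hat m_{(w,l)}[k]\bigr|^2\bigl(1-g_\sigma[k]\bigr)^2 \quad\text{and}\quad \|\vm_{(w,l)}\|_2^2 \;=\; \tfrac{1}{n}\sum_{k=0}^{n-1}\bigl|\hat m_{(w,l)}[k]\bigr|^2.
\end{align*}

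Next, I bound $(1-g_\sigma[k])^2$ uniformly across $k\in\gI_{n-1}$. With the Gaussian envelope indexed in normalized frequency (consistently with the bound $(1-n^{-1})^2$ appearing in the statement), $g_\sigma[k] = \exp(-(k/n)^2/(2\sigma^2))$ is monotonically decreasing in $k/n$ on $[0, 1-n^{-1}]$, so
\begin{align*}
\bigl(1 - g_\sigma[k]\bigr)^2 \;\le\; \bigl(1 - g_\sigma[n-1]\bigr)^2 \;=\; \Bigl(1 - e^{-(1-n^{-1})^2/(2\sigma^2)}\Bigr)^{\!2}.
\end{align*}
Pulling this constant out of the frequency-domain sum and dividing by $\|\vm_{(w,l)}\|_2^2$ yields
\begin{align*}
\frac{\|\vm_{(w,l)} - \tilde{\vm}_{(w,l)}^\sigma\|_2^2}{\|\vm_{(w,l)}\|_2^2} \;\le\; \Bigl(1 - e^{-(1-n^{-1})^2/(2\sigma^2)}\Bigr)^{\!2},
\end{align*}
and taking square roots gives exactly the claimed inequality. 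Convergence $\tilde{\vm}_{(w,l)}^\sigma \to \vm_{(w,l)}$ as $\sigma\to\infty$ follows immediately since the exponent $-(1-n^{-1})^2/(2\sigma^2) \to 0$, hence the right-hand side vanishes.

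The main subtlety, and the only nontrivial step, is making the indexing of the Gaussian $\vg_\sigma$ in frequency consistent with the bound $(1-n^{-1})^2$ appearing in the statement; once that is pinned down, the proof is a one-line application of Parseval followed by a pointwise $\ell_\infty$ bound on the multiplier $(1-g_\sigma[k])$. I do not anticipate any genuine obstacle beyond this normalization bookkeeping, since the argument does not depend on the precise form of $\hat m_{(w,l)}$ — only on $|\hat m_{(w,l)}|^2$ being nonnegative, which lets the uniform bound on $(1-g_\sigma)^2$ pass through the sum cleanly.
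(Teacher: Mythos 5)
Your proof is correct and follows essentially the same route as the paper's: Parseval's identity to move to the frequency domain, factoring out the multiplier $\bigl(1-g_\sigma[k]\bigr)^2$, bounding it uniformly over the frequency grid, and applying Parseval again to recover $\|\vm_{(w,l)}\|_2^2$. You are also right that the normalization of the Gaussian is the one delicate point — the stated bound only follows if $g_\sigma$ is evaluated at the normalized frequency $k/n$ (so that the maximum of $1-g_\sigma$ occurs at $(n-1)/n \le 1-n^{-1}$), and your proof pins this down explicitly where the paper's main-text definition $g_\sigma[k]=\exp(-k^2/(2\sigma^2))$ would make the corresponding inequality step go the wrong way.
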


\begin{proof}
    By Parseval's theorem,
    \begin{align}
        \|\vm_{(w,l)} - \tilde{\vm}_{(w,l)}^\sigma\|_2^2 &= \frac{1}{n}\|\hat{\vm}_{(w,l)} - \hat{\tilde{\vm}}_{(w,l)}^\sigma\|_2^2 \label{eq:parseval1}\\
        &= \frac{1}{n}\sum_{k=0}^{n-1} \left|\mathrm{d}_w[k] - \mathrm{d}_w[k]e^{-\frac{k^2}{2\sigma^2}} \right|^2 \nonumber \\
        &=\frac{1}{n}\sum_{k=0}^{n-1} \left|\mathrm{d}_w[k]\right|^2 \left(1 - e^{-\frac{k^2}{2\sigma^2}}\right)^2  \nonumber\\
        &\le \frac{1}{n}\sum_{k=0}^{n-1} \left|\mathrm{d}_w[k]\right|^2 \left(1 - e^{-\frac{\left(\frac{n-1}{n}\right)^2}{2\sigma^2}}\right)^2   \nonumber\\
        &= \|\vm_{(w,l)}\|_2^2 \left(1 - e^{-\frac{\left(1-\frac{1}{n}\right)^2}{2\sigma^2}}\right)^2, \label{eq:parseval2}
    \end{align}
    where \Eqref{eq:parseval2} is also due to Parseval's theorem.

\end{proof}

\subsubsection{Proof of Theorem \ref{thm:bounded_grad}}

Let us first consider the derivative of the sinc function in \eqref{eq:dirichlet_kernel}.
\begin{lemma}\label{lemma_appendix:sinc_derivative}
    For any $n\neq0$, 
    $\frac{\mathrm{d} \sinc\frac{wk}{n}}{\mathrm{d} w} = \frac{\cos \frac{\pi w k}{n}}{w} - \frac{\sinc\frac{wk}{n}}{w}$.
\end{lemma}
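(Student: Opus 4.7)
The plan is a direct calculation via the chain and quotient rules, treating $k$ and $n$ as constants and $w$ as the sole variable. First I would unfold the normalized sinc per the definition given in the Notation section, writing $\sinc(wk/n) = \sin(\pi wk/n)/(\pi wk/n)$, so that the left-hand side becomes the ordinary derivative of this explicit quotient with respect to $w$.

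Next I would apply the quotient rule. Differentiating the numerator with the chain rule produces $(\pi k/n)\cos(\pi wk/n)$, and differentiating the denominator produces $(\pi k/n)$. Assembling these according to $(f/g)' = (f'g - fg')/g^2$ yields a single fraction in which the common factor $\pi k/n$ appears in every additive piece of the numerator and can be canceled against one factor of $\pi k/n$ in the denominator $(\pi wk/n)^2$. After simplification, the expression splits into two terms: one of the form $\cos(\pi wk/n)/w$ and one of the form $-\sin(\pi wk/n)/(\pi w^2 k/n)$.

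The last step is to recognize the second term as $-\sinc(wk/n)/w$. Indeed, multiplying and dividing by $k/n$ we have $\sin(\pi wk/n)/(\pi w^2 k/n) = (1/w)\cdot \sin(\pi wk/n)/(\pi wk/n) = \sinc(wk/n)/w$, which matches the form in the stated identity. Combining the two terms gives the claim.

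The argument is elementary and I do not expect any real obstacle; the only care required is tracking the $\pi k/n$ factors faithfully through the quotient rule and spotting the $\sinc$ pattern at the end. The identity holds for $w\neq 0$; at $w=0$ the Taylor expansions $\cos(\pi wk/n) = 1 - \tfrac{1}{2}(\pi wk/n)^2 + O(w^4)$ and $\sinc(wk/n) = 1 - \tfrac{1}{6}(\pi wk/n)^2 + O(w^4)$ show that the right-hand side has a removable singularity taking value $0$, consistent with differentiating the even function $\sinc(wk/n)$ at $w=0$. Since the lemma is invoked inside the broader Gaudi-gradient argument where it suffices to treat generic $w$, this extension is not strictly needed.
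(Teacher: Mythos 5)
Your proposal is correct and follows essentially the same route as the paper: unfold the normalized sinc as $\sin(\pi wk/n)/(\pi wk/n)$, differentiate the quotient directly, and regroup the two resulting terms as $\cos(\pi wk/n)/w - \sinc(wk/n)/w$. Your added remark on the removable singularity at $w=0$ is a small bonus the paper omits, but it does not change the substance of the argument.
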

\begin{proof}
    \begin{align*}
      \frac{\mathrm{d} \sinc\frac{wk}{n}}{\mathrm{d} w} &=  \frac{\mathrm{d}}{\mathrm{d} w} \frac{\sin \frac{\pi w k}{n}}{\frac{\pi w k}{n}} \\
      &= -\left(\frac{\pi wk}{n}\right)^{-2} \frac{\pi k}{n} \sin \frac{\pi wk}{n} + \left(\frac{\pi wk}{n}\right)^{-1} \cos \frac{\pi wk}{n} \cdot \frac{\pi k}{n}\\
      &= \frac{\cos \frac{\pi w k}{n}}{w} - \frac{\sin\frac{\pi wk}{n}}{\left(\frac{\pi wk}{n}\right)^2} \\
      &=\frac{\cos \frac{\pi w k}{n}}{w} - \frac{\sinc\frac{wk}{n}}{w}.
    \end{align*}
\end{proof}
Note that the derivative in Lemma \ref{lemma_appendix:sinc_derivative} at $w\to0$ is zero since $\lim_{t\to 0} \sinc'(t)=0$.

The following lemma is useful for proving Theorem \ref{thm:bounded_grad}.
\begin{lemma}\label{lemma_appendix:gaudi_gradient}
    For any $\sigma \in (0,\infty]$ and $w,l\in[0,n]$, the derivative of the Gaudi function $\vd_{(w,l)}^\sigma$ with respect to $w$ is as follows:
    \begin{align*}
        \frac{\partial {d}_{w}^\sigma[k]}{\partial w} &= e^{-2\pi \imath \frac{k}{n} w}  \cdot e^{\pi \imath \frac{k}{n}}\cdot e^{-\frac{k^2}{2\sigma^2}}\cdot \frac{1}{\sinc\frac{k}{n}}.
    \end{align*}
\end{lemma}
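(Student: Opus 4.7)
The plan is a direct calculation starting from the definition
$\mathrm{d}_w^\sigma[k] = e^{-k^2/(2\sigma^2)} \cdot w\,\dfrac{\sinc(wk/n)}{\sinc(k/n)}\, e^{\imath \pi k(1-w)/n}$.
I factor out the $w$-independent prefactor
$C(k,\sigma) := e^{-k^2/(2\sigma^2)} \cdot e^{\imath \pi k/n}/\sinc(k/n)$
so that $\mathrm{d}_w^\sigma[k] = C(k,\sigma)\cdot F(w)$ where
$F(w) := w\sinc(wk/n)\,e^{-\imath\pi kw/n}$.
The task then reduces to establishing $F'(w) = e^{-2\imath \pi kw/n}$, from which multiplication by $C(k,\sigma)$ yields exactly the claimed expression.

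The cleanest way to obtain $F'(w)$ is to collapse $F$ via Euler's formula. Since $\sinc(x) = \sin(\pi x)/(\pi x)$, one gets $w\sinc(wk/n) = \frac{n}{\pi k}\sin(\pi w k/n)$; writing $\sin(\pi wk/n) = \frac{1}{2\imath}(e^{\imath\pi wk/n} - e^{-\imath\pi wk/n})$ and multiplying by $e^{-\imath\pi kw/n}$ yields
$F(w) = \dfrac{n}{2\pi\imath k}\bigl(1 - e^{-2\imath\pi kw/n}\bigr)$.
Differentiating in $w$ collapses the first term and the chain rule on the exponential cancels the $n/(2\pi\imath k)$ prefactor, producing $F'(w) = e^{-2\imath\pi kw/n}$. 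As a sanity check, an alternative derivation uses the product rule on $F$ together with Lemma \ref{lemma_appendix:sinc_derivative} for $\frac{\partial}{\partial w}\sinc(wk/n) = \cos(\pi wk/n)/w - \sinc(wk/n)/w$; the two $\sinc(wk/n)$ contributions cancel and the remainder simplifies via $\pi kw/n\cdot \sinc(wk/n) = \sin(\pi wk/n)$ to $e^{-\imath\pi kw/n}\bigl[\cos(\pi wk/n) - \imath\sin(\pi wk/n)\bigr] = e^{-2\imath\pi kw/n}$.

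I do not expect any serious obstacle; the derivation is a one-line manipulation once the $w$-independent factors are peeled off. The only mild subtlety is the apparent $1/k$ singularity at $k=0$ in the Euler-formula expression, which is illusory: at $k=0$ the original $F(w)$ reduces to $w$ with $F'(0)=1$, and the claimed formula likewise evaluates to $1$ under the convention $\sinc(0)=1$, so the identity holds for every $k\in\gI_{n-1}$.
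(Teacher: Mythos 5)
Your proof is correct, and it reaches the result by a slightly different (and cleaner) route than the paper. The paper factors out the same $w$-independent constant $C_k = e^{\pi\imath k/n}e^{-k^2/(2\sigma^2)}/\sinc\frac{k}{n}$ and then differentiates $w\,\sinc\frac{wk}{n}\,e^{-\imath\pi kw/n}$ by the product rule using Lemma \ref{lemma_appendix:sinc_derivative} — exactly your ``sanity check'' derivation. Your primary route, collapsing $w\,\sinc\frac{wk}{n}\,e^{-\imath\pi kw/n} = \frac{n}{2\pi\imath k}\bigl(1-e^{-2\imath\pi kw/n}\bigr)$ via Euler's formula, makes the differentiation a one-liner and dispenses with Lemma \ref{lemma_appendix:sinc_derivative} entirely; its only cost is the apparent $1/k$ singularity at $k=0$, which you correctly dispose of. Two things are worth flagging. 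First, your answer $F'(w)=e^{-2\imath\pi kw/n}$ agrees with the lemma statement, whereas the last two displayed lines of the paper's own proof read $C_k\cdot(\cos\frac{\pi wk}{n}-\imath\sin\frac{\pi wk}{n}) = C_k\,e^{-\imath\pi kw/n}$ — a factor of $e^{-\imath\pi kw/n}$ has been dropped there; carrying it through gives $C_k\,e^{-2\imath\pi kw/n}$, i.e.\ the stated result, so the paper's proof contains a typo that your derivation silently corrects. Second, you handle the $k=0$ case explicitly, which the paper's proof does not bother to separate out (the product-rule route has its own removable $0/0$ at $w=0$ hidden in $\frac{\cos\frac{\pi wk}{n}-\sinc\frac{wk}{n}}{w}\cdot w$, which cancels before evaluation); your treatment is the more careful one.
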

\begin{proof}
    Let us merge the terms of the Gaudi function $\mathrm{d}_w^\sigma [k]$ that does not contain $w$ into $C_k$:
    \begin{align*}
        \mathrm{d}_w^\sigma [k] &= e^{-\frac{k^2}{2\sigma^2}} \cdot w \cdot \frac{\sinc \frac{wk}{n}}{\sinc \frac{k}{n}} e^{\frac{\imath\pi k (1-w)}{n}} \\
        &= w \cdot \sinc \frac{wk}{n} \cdot e^{\frac{-\imath \pi k w}{n}} \cdot C_k
    \end{align*}
    where $C_k = e^{\pi \imath \frac{k}{n}}e^{-\frac{k^2}{2\sigma^2}}\frac{1}{\sinc\frac{k}{n}}$.
    From Lemma \ref{lemma_appendix:sinc_derivative}, the derivative of the Gaudi function with respect to $w$ is as follows:
    \newcommand{\exptermlemma}{e^{-\pi \imath \frac{k}{n}w}}
    \begin{align}
        \frac{\partial }{\partial w} \mathrm{d}_w^\sigma[k] &= C_k \cdot \frac{\partial }{\partial w} \left(w \cdot \sinc \frac{wk}{n} \cdot e^{\frac{-\imath \pi k w}{n}} \right)  \nonumber \\
        &= C_k \cdot \left( \sinc\frac{wk}{n}\exptermlemma + \left(\frac{\cos \frac{\pi w k}{n} - \sinc \frac{wk}{n}}{w}\right) w \exptermlemma + w \sinc \frac{wk}{n} \cdot \left(-\pi \imath \frac{k}{n}\right) \exptermlemma \right) \nonumber\\
        &= C_k \cdot ( \cos \frac{\pi w k}{n} - \imath \sin \frac{\pi w k}{n} ) \nonumber\\
        &= C_k \cdot e^{-\imath \pi \frac{k}{n} w}. \nonumber
    \end{align}
\end{proof}

Now we prove Theorem \ref{thm:bounded_grad}.
\newtheorem*{theorem3}{Theorem 3}
\begin{theorem3}
    Let $n<\infty$ be a finite positive integer.
    For any $\sigma \in (0,\infty]$ and $w,l\in[0,n]$, the derivatives of the Gaudi mask $\tilde{\vm}_{(w,l)}^\sigma$ with respect to $w$ and $l$ are bounded almost everywhere:
    \begin{align*}
        \left\|\frac{\partial \tilde{\vm}_{(w,l)}^\sigma}{\partial w}\right\|_2 < \infty, \quad  \left\|\frac{\partial \tilde{\vm}_{(w,l)}^\sigma}{\partial l}\right\|_2 < \infty.
    \end{align*}
\end{theorem3}
\begin{proof}
    Here we use Parseval's theorem again:
    \begin{align*}
        \left\|\frac{\partial \tilde{\vm}_{(w,l)}^\sigma}{\partial w}\right\|_2^2 &= \frac{1}{n}\sum_{k=0}^{n-1}\left| \frac{\partial}{\partial w} \mathrm{d}_w^\sigma[k] e^{-2\pi \imath \frac{k}{n}l} \right| \left| \frac{\partial}{\partial w} \mathrm{d}_w^\sigma[k] e^{-2\pi \imath \frac{k}{n}l} \right|^*\\
        &= \frac{1}{n}\sum_{k=0}^{n-1}  \left|e^{-2\pi \imath \frac{k}{n} w}  \cdot e^{\pi \imath \frac{k}{n}}\cdot e^{-\frac{k^2}{2\sigma^2}}\cdot \frac{1}{\sinc\frac{k}{n}} e^{-2\pi \imath \frac{k}{n}l}\right|^2 \\
        &= \frac{1}{n}\sum_{k=0}^{n-1}  \left|e^{-\frac{k^2}{2\sigma^2}}\cdot \frac{1}{\sinc\frac{k}{n}} \right|^2  \\
        &\le \frac{1}{n}\sum_{k=0}^{n-1}  \left|\frac{1}{\sinc\frac{n-1}{n}} \right|^2 \\
        &< \infty,
    \end{align*}
    where the first inequality used the fact that $e^{-\frac{k^2}{2\sigma^2}}\le 1$ and $\frac{1}{\sinc\frac{k}{n}}\le \frac{1}{\sinc\frac{n-1}{n}}$ for $k=0,1,\ldots,n-1$.
    Similarly, the bound on the norm of the derivative with respect to the location parameter can be also derived as below:
    \begin{align*}
        \left\|\frac{\partial \tilde{\vm}_{(w,l)}^\sigma}{\partial l}\right\|_2^2 &= \frac{1}{n}\sum_{k=0}^{n-1}\left|  \mathrm{d}_w^\sigma[k] \frac{\partial}{\partial l}e^{-2\pi \imath \frac{k}{n}l} \right| \left| \mathrm{d}_w^\sigma[k]\frac{\partial}{\partial l} e^{-2\pi \imath \frac{k}{n}l} \right|^*\\
        &= \frac{1}{n}\sum_{k=0}^{n-1} \left| \frac{\sinc \frac{wk}{n}}{\sinc\frac{k}{n}} e^{-\frac{k^2}{2\sigma^2}} e^{-\frac{2\pi\imath(1-w)}{n}} (-2\pi\imath \frac{k}{n}) e^{-2\pi\imath \frac{k}{n}l} \right|^2 \\
        &\le\frac{1}{n}\sum_{k=0}^{n-1} \left| \frac{1}{\sinc\frac{n-1}{n}} 2 \pi \frac{n-1}{n} \right|^2 \\
        &< \infty.
    \end{align*}
\end{proof}

\subsubsection{Proof of Corollary \ref{cor:nonzer_der_at_zero}}

Before proving Corollary \ref{cor:nonzer_der_at_zero}, we introduce another Corollary of Lemma \ref{lemma_appendix:gaudi_gradient}.

\begin{corollary}\label{cor_appendix:gaudi_mask_gradient}
    For any $\sigma \in (0,\infty]$ and $w,l\in[0,n]$, the derivatives of the Gaudi mask $\tilde{\vm}_{(w,l)}^\sigma$ with respect to $w$ and $l$ are given as follows:
    \begin{align*}
        \frac{\partial \tilde{m}_{(w,l)}^\sigma[k]}{\partial w} &= \frac{1}{n}\sum_{k=0}^{n-1}F_{jk}^* \cdot e^{-2\pi \imath \frac{k}{n} (w+l)}  \cdot e^{\pi \imath \frac{k}{n}}\cdot e^{-\frac{k^2}{2\sigma^2}}\cdot \frac{1}{\sinc\frac{k}{n}}, \\
        \frac{\partial \tilde{m}_{(w,l)}^\sigma[k]}{\partial l} &= \frac{1}{n}\sum_{k=0}^{n-1} F_{jk}^* \cdot \left(-2\pi\imath \frac{k}{n}\right) \cdot e^{-2\pi \imath \frac{k}{n} l}  \cdot \mathrm{d}_w^\sigma [k],
    \end{align*}
    where $F_{jk}^*=e^{2\pi\imath j \frac{k}{n}}$ is the $(j,k)$th element of the inverse discrete Fourier transform (IDFT) matrix.
\end{corollary}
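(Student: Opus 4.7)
The statement is an essentially computational consequence of Lemma \ref{lemma_appendix:gaudi_gradient} combined with the definition of the Gaudi mask as the IDFT of the product $\mathbf{d}_w^\sigma \cdot e^{-2\pi\imath \vk l / n}$. My plan is to write the $j$th component of $\tilde{\vm}_{(w,l)}^\sigma$ explicitly using the IDFT matrix entries $F_{jk}^{*} = e^{2\pi\imath j k / n}$, namely
\[
\tilde{m}_{(w,l)}^\sigma[j] \;=\; \frac{1}{n}\sum_{k=0}^{n-1} F_{jk}^{*} \cdot \mathrm{d}_w^\sigma[k] \cdot e^{-2\pi\imath \frac{k}{n} l},
\]
and then interchange the partial derivative (in $w$ or $l$) with the finite sum over $k$. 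Because $n < \infty$ the interchange requires no justification beyond linearity, so no analytic subtlety is introduced.

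For the $w$-derivative, inside each summand only the factor $\mathrm{d}_w^\sigma[k]$ depends on $w$, so the derivative of the summand reduces to multiplying by $\partial \mathrm{d}_w^\sigma[k]/\partial w$. Substituting the closed form from Lemma \ref{lemma_appendix:gaudi_gradient}, which equals $e^{-2\pi\imath \frac{k}{n} w} \cdot e^{\pi\imath \frac{k}{n}} \cdot e^{-k^2/(2\sigma^2)} \cdot (\sinc(k/n))^{-1}$, and then absorbing the pre-existing factor $e^{-2\pi\imath \frac{k}{n} l}$, the two phases merge into $e^{-2\pi\imath \frac{k}{n}(w+l)}$, which yields the first claimed formula.

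For the $l$-derivative, the only $l$-dependent part of the summand is the complex exponential $e^{-2\pi\imath \frac{k}{n} l}$, so the chain rule immediately brings down $-2\pi\imath \frac{k}{n}$ while leaving the full Gaudi function $\mathrm{d}_w^\sigma[k]$ untouched as a prefactor. This directly gives the second claimed formula.

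The main obstacle, such as it is, is purely bookkeeping of complex exponentials: I must verify that the phase factors $e^{\pi\imath k/n}$, $e^{-2\pi\imath k w/n}$, and $e^{-2\pi\imath k l/n}$ combine correctly into $e^{-2\pi\imath k(w+l)/n}\cdot e^{\pi\imath k/n}$ for the $w$-case, and that the $w$-dependent sinc and linear prefactors of $\mathrm{d}_w^\sigma[k]$ are preserved intact for the $l$-case. No additional estimates, limits, or regularity arguments are required.
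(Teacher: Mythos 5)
Your proposal is correct and follows essentially the same route as the paper's proof: expand the IDFT as a finite sum, interchange differentiation with the sum by linearity, substitute the closed form of $\partial \mathrm{d}_w^\sigma[k]/\partial w$ from Lemma \ref{lemma_appendix:gaudi_gradient} and merge the phase factors for the $w$-case, and apply the chain rule to the exponential $e^{-2\pi\imath \frac{k}{n}l}$ for the $l$-case. Your explicit use of the component index $j$ on the left-hand side also quietly repairs the index clash present in the paper's statement (where $k$ appears both as the component index and as the summation variable).
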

\begin{proof}
    By linearity of differentiation and Lemma \ref{lemma_appendix:gaudi_gradient}
    \begin{align*}
        \frac{\partial }{\partial w} \tilde{m}_{(w,l)}^\sigma[k] &= \frac{\partial }{\partial w} \frac{1}{n}\sum_{k=0}^{n-1} F_{jk}^* \cdot e^{-2\pi\imath \frac{k}{n}l} \cdot \mathrm{d}_w^\sigma [k] \\
        &=  \frac{1}{n}\sum_{k=0}^{n-1} F_{jk}^* \cdot e^{-2\pi\imath \frac{k}{n}l} \cdot\frac{\partial }{\partial w} \mathrm{d}_w^\sigma [k] \\
        &= \frac{1}{n}\sum_{k=0}^{n-1} F_{jk}^* \cdot e^{-2\pi\imath \frac{k}{n}l} \cdot e^{-2\pi \imath \frac{k}{n} w}  \cdot e^{\pi \imath \frac{k}{n}}\cdot e^{-\frac{k^2}{2\sigma^2}}\cdot \frac{1}{\sinc\frac{k}{n}} \\
        &= \frac{1}{n}\sum_{k=0}^{n-1}F_{jk}^* \cdot e^{-2\pi \imath \frac{k}{n} (w+l)}  \cdot e^{\pi \imath \frac{k}{n}}\cdot e^{-\frac{k^2}{2\sigma^2}}\cdot \frac{1}{\sinc\frac{k}{n}}.
    \end{align*}
    The derivative with respect to the location parameter is a direct consequence of the derivative of the exponential function.
\end{proof}

\newtheorem*{corollary4}{Corollary 4}
\begin{corollary4}
    For any $l\in[0,n]$ and $\sigma\in(0,\infty]$, the norm of the derivative of the Gaudi mask $\tilde{\vm}_{(w,l)}^\sigma$ with respect to $w$ at $w=0$ is well-defined and greater than zero, i.e., $0< \left\| \frac{\partial \tilde{\vm}_{(w,l)}^\sigma}{\partial w}\right\|_2 <\infty $.
\end{corollary4}
\begin{proof}
    From Corollary \ref{cor_appendix:gaudi_mask_gradient}
    \begin{align*}
        \left. \frac{\partial \tilde{m}_{(w,l)}^\sigma[k]}{\partial w} \right|_{w=0} &= \frac{1}{n}\sum_{k=0}^{n-1}F_{jk}^* \cdot e^{-2\pi \imath \frac{k}{n} l}  \cdot e^{\pi \imath \frac{k}{n}}\cdot e^{-\frac{k^2}{2\sigma^2}}\cdot \frac{1}{\sinc\frac{k}{n}}.
    \end{align*}
    Since $ e^{-2\pi \imath \frac{k}{n} l}  \cdot e^{\pi \imath \frac{k}{n}}\cdot e^{-\frac{k^2}{2\sigma^2}}\cdot \frac{1}{\sinc\frac{k}{n}}$ is not zero for any $l\in[0,n]$ and $\sigma \in (0,\infty]$ and the IDFT matrix $\mF^*$ is unitary (up to scaling factor), the derivative $\frac{\partial }{\partial w} \tilde{\vm}_{(w,l)}^\sigma \neq \boldsymbol{0}$.
\end{proof}

\subsection{Initialization Algorithm}\label{sec:app_initialization}

Suppose an $n$-by-$n$ matrix $\mW_{\text{init}}$  is given before the initialization step. 
We first initialize the structural parameters based on the \textit{correlation} within the columns of $\mW_{\text{init}}$.

Consider the Gram matrix $\mC=\mW_{\text{init}}^T \mW_{\text{init}}$.
$C_{ij}=\mW_{:,i}^T\mW_{:,j}$ is the inner product between the $i$th column and the $j$th column of $\mW_{\text{init}}$. That is, the higher $|C_{ij}|$ is, the more correlated the columns are.
Thus, for the width and the location parameter of the column of the first block,  we pick a row of the Gram matrix $\mC_{i_1,:}$ where $i_1$ is the index of the row which has the \textbf{largest} norm. Then we apply a smoothing filter (e.g., Gaussian filter with the standard deviation $\gamma>0$) to the absolute values of the row, which is followed by binarizing the filtered output based on some threshold $\tau>0$.
Hence, for the $k$th block, it can be summarized as follows:
\begin{align} 
    b_{i_k}[j] &= \begin{cases}
        1 & c_{i_k}[j] \ge \tau \\
        0 & c_{i_k}[j] < \tau
    \end{cases},
    \quad \vc_{i_k} = \mathrm{Filter}(|\mC_{i_k,:}|, \gamma), \quad i_k = \mathrm{argsort}(\{\|\mC_{i,:}\|_2\}_{i=1}^n)[k]. \label{eq:initialization_steps}
\end{align}
As a result, the binary vector $\vb_i$ consists of the chunks of ones and zeros. 
We choose the longest chunk of ones and initialize the width and the location parameters $(w_k^C, l_k^C)$ correspondingly. 
For the row parameters of the $k$th block, the same procedure is repeated  with the columns of $\mW_{\text{init}}$ selected in the column parameter initialization step. 

Next, the content parameters are set by the left and the right singular vectors of $\mW_{\text{init}}$ scaled by the singular values, i.e., $\mU\gets \mU_{\text{init}}\mS_{\text{init}}^{1/2}$ and $\mV\gets \mS_{\text{init}}^{1/2}\mV^T$ where $\mW_{\text{init}}=\mU_{\text{init}}\mS_{\text{init}}\mV_{\text{init}}^T$.

Finally, we update the structural parameters and content parameters by minimizing the following objective function:
\begin{align}\label{prob:initialization}
    \min_{\mU, \mV, \boldsymbol{\phi}} \left\| \mW_{\text{init}} - \mW^{\boldsymbol{\theta}} \right\|_F^2 + \lambda \|\vw\|_1 + \boldsymbol{1}_{\vw\in \gI_{n}^n} + \boldsymbol{1}_{\vl\in\gI_{n-1}^n}
\end{align}
We summarize the initialization steps in Algorithm \ref{algo:initialization}.
  \begin{algorithm}[H]
    \caption{GBLR Parameter Initialization}
    \label{algo:initialization}
    \begin{algorithmic}[1]
        \State $\mC \gets \mW^T \mW$ \Comment{Target matrix $\mW\in\R^{n\times n}$}
        \State $i_1, i_2, \ldots, i_n \gets \mathrm{argsort}(\{\|\mC_{i,:}\|_2\}_{i=1}^n)$
        \State $b\gets 0$
        \For{$k=1,\ldots,n$}
            \State $\phi_k^C \gets \mathtt{get\_mask\_params}(|\mC_{i_k,:}|, \gamma, \tau)$ 
            \State $\mR=(\mW\cdot \vm_k^C) (\mW\cdot \vm_k^C)^T$
            \State $i'_k\gets \mathrm{argmax}(\{\|\mR_{i,:}\|_2\}_{i=1}^n)$
            \State $\phi_k^R  \gets \mathtt{get\_mask\_params}(|\mR_{i'_k,:}^T|, \gamma, \tau)$
            \State $b\gets b+\mathrm{nnz}(\vm_k^c) + \mathrm{nnz}(\vm_k^R)$
            \If{$b>s$}
                \State $w_k^c\gets 0, w_k^R\gets 0$; \textbf{break}
            \EndIf
        \EndFor
        \State $\mU, \mV \gets \mathsf{SVD}(\mW)$
        \State Solve \Eqref{prob:initialization} by Alg. \plaineqref{algo:adapgd}. 
    \end{algorithmic}
\end{algorithm}

\subsection{Rectengular Gaudi-GBLR Matrices}\label{sec:app_rectangular}
When the number of rows $m$ is not equal to the number of columns $n$, the width and location parameters for rows are defined on $\gI_{m}$ and $\gI_{m-1}$, respectively, whereas the structural parameters on columns are defined on $\gI_{n}$ and $\gI_{n-1}$.
Except for the index sets, the GBLR matrix on the rectangular case is defined exactly the same. Theorem \ref{thm:expressive} and \ref{thm:interpolation} hold as well.
The Gaudi mask is also defined in the same manner.

\subsection{Gaudi Mask with Real-valued Widths and Locations}\label{sec:app_gaudi_mask_cont_domain}
Let us consider the Gaudi mask in \Eqref{eq:gaudi} with the continuous-valued widths and locations $w,l\in[0,n]$.
Also, let us assume for now that $\sigma=\infty$, namely, no Gaussian smoothing is applied.
On one hand, \Eqref{eq:gaudi} still compiles the time-domain signal after IDFT.
On the other hand, the output of the IDFT is not a boxcar mask anymore if at least one of $w$ and $l$ is not an integer.
In Figure \ref{fig:gaudi-mask-real-width-loc}, we present the time-domain signal of three cases of the Gaudi mask on $n=512$: 1) integer-valued width and locations with the smoothing parameter $\sigma=\infty$, namely, no Gaussian filter is applied in the Gaudi function; 2) non-integer-valued width and location with $\sigma=\infty$; and 3) non-integer-valued width and location with $\sigma=100$.
As illustrated in the figure, the non-integer-valued parameters generate many spiking errors (middle). 
Notably, however, the errors are alleviated when the smoothing parameter $\sigma=100$ (right), which almost recovers the shape of the signal with the integer-valued parameters (left). 

\begin{figure}[H]
    \centering
    \includegraphics[width=\linewidth]{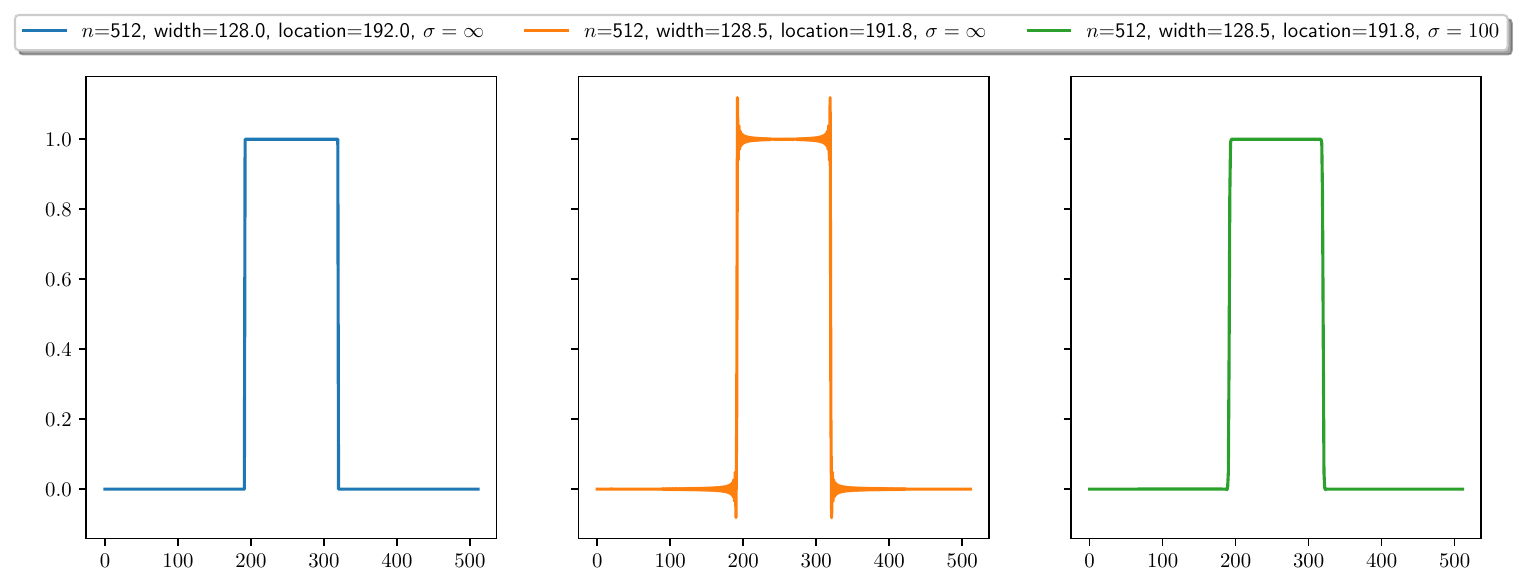}
    \caption{Gaudi Masks in time-domain with different width, location and $\sigma$ on $n=512$. Left: integer width $w=128$ and  integer location $l=192$ with $\sigma=\infty$, i.e., no Gaussian smoothing. Middle: non-integer width $w=128.5$ and location $l=191.8$ with $\sigma=\infty$. Due to the non-integer parameters, the time-domain signal contains spiking errors. Right: non-integer width $w=128.5$ and location $l=191.8$ with $\sigma=100$. The Gaussian smoothing in the Gaudi function alleviates the spiking errors.}
    \label{fig:gaudi-mask-real-width-loc}
\end{figure}

\subsection{More Details on Proximal Gradient Descent on Gaudi-GBLR Parameters in Deep Neural Networks}\label{sec:pgd_on_dnn}

\subsubsection{Discrete Structural Parameters}
Here we discuss solving Problem \plaineqref{prob:unconstrained_problem_gaudi} using the discrete width and location parameters $w\in\gI_{n}$ and $l\in\gI_{n-1}$.
To be specific, given real-valued widths $\vw\in[0,n]^K$ and locations $\vl\in[0,n]^K$ of a Gaudi-GBLR matrix, we apply a Straight Through Estimator \citep{hinton2012neural,bengio2013estimating,zhou2016dorefa} to $\vw$ and $\vl$ \textit{before} generating the masks.
Then, we project each value of $\vw$ and $\vl$ to $[0,n]$ \textit{after} the PGD step.
We find that this setting is useful when the Gaussian smoothing parameter $\sigma$ of the Gaudi function is very high or infinite, since the non-integer parameters generate spiking errors as discussed in Appendix \ref{sec:app_gaudi_mask_cont_domain}.

\subsubsection{Multiple Gaudi-GBLR Matrices}
For a DNN with $T$ weight matrices, we suggest adjusting the shrinkage parameter $\lambda$ in \eqref{eq:PGD} based on the predefined computational cost \textit{budget} $B$ for the DNN.
For example, one can set $\lambda=0$ if the sum of average widths of the weight matrices $\bar{w}_{\text{sum}}=\bar{w}(\mW_1) +\bar{w}(\mW_2) +  \cdots + \bar{w}(\mW_T)$ is below $B$, and use $\lambda=\lambda_0>0$ if $\bar{w}_{\text{sum}}$ is above $B$ where $\lambda_0$ is given as a hyperparameter by user.
In our experiments, this strategy effectively (not necessarily equally) distributes the computational cost to each matrix to meet the overall cost budget constraint during the learning process.

\subsection{Impact of SVD}

\textbf{SVD.} Since our method involves Singular Vector Decomposition (SVD) during initialization, the time complexity of the SVD is discussed in this section.
The impact caused by SVD is negligible in the training process. The SVD operation performed during initialization decomposes at most $4096 \times 1024$-sized matrix, which takes less than a second on commercial CPUs or GPUs, whereas the overall training or fine-tuning time ranges from a few hours to tens of hours. Also, the SVD does not occur during the inference stage, so it does not become a bottleneck in any sense once the network is trained.

\subsection{Extended Experimental Results and Details}\label{sec:experimental_details}

\subsubsection{Fine-tuning}

In fine-tuning experiments, we use the pre-trained ViT-Base \citep{dosovitskiy2020image}.
For each pre-trained weight on Query, Key, Value and MLPs, we assign the Gaudi-GBLR matrix where the structural and content parameters are initialized by Algorithm \ref{algo:initialization}.
We use the smoothing parameter of the filter $\gamma=1$ and the threshold to obtain the binary mask $\tau=0.98 \cdot \max(\vc_{i_k})$ in \Eqref{eq:initialization_steps}.
During the fine-tuning, we use $\sigma=\infty$ in \Eqref{eq:gaudi}
Then, all parameters are updated by SGD with AdamW\citep{loshchilov2017decoupled} optimizers. 

For the Block Low-Rank matrices, we use Monarch \citep{dao2022monarch} implementation. 
We project the pre-trained matrix by minimizing the Frobenius norm between the pretrained one and the Monarch matrix by gradient descent for 1000 steps with the learning rate of $0.001$.

The initialization learning rate for the structural parameters was set to $0.0025$ and decayed to $0.000025$.
We trained the networks for 35 epochs with the learning rate of $0.0001$ for all parameters, including the structural parameters of the Gaudi-GBLR matrices.

For the WikiText103 experiments, we used the publicly available GPT-2 model\footnote{\url{https://huggingface.co/Graphcore/gpt2-wikitext-103}} for the pre-trained weights.

\subsubsection{Training from Scratch}

For the Gaudi-GBLR matrices, we initialize location parameters to zero and width parameters to the low rank + block sparse structure.
The content parameters as well as other weights are initialized randomly.

During the training with Proximal Gradient Descent in \Eqref{eq:PGD}, as in Appendix \ref{sec:pgd_on_dnn}, the shrinkage parameter $\lambda$ is set to zero if the average width of the Gaudi-GBLR matrices of the DNN is below the predefined budget, and recovers the predefined value $\lambda_0$ otherwise. 

Also, we anneal $\sigma=1$ to $100$ during the training process. $\sigma$ is fixed to $1$ for 5 warm-up epochs, then linearly increases to $100$ for 295 epochs.
The hyperparameters used in the ImageNet experiment is presented in Table \ref{tab:hyperparams-vit-base-imgnet}.

\begin{table}[H]
\caption{Hyperparameters used in ImageNet experiments using ViT-Base for training from scratch.}
\label{tab:hyperparams-vit-base-imgnet}
\resizebox{\columnwidth}{!}{%
\begin{tabular}{|l|ll|ll|l|l|l|l|}
\hline
\multirow{2}{*}{Model} &
  \multicolumn{2}{l|}{Learning Rate} &
  \multicolumn{2}{l|}{Shrinkage} &
  \multirow{2}{*}{$\sigma$ (init/final)} &
  \multirow{2}{*}{Epochs} &
  \multirow{2}{*}{Weight Decay} &
  \multirow{2}{*}{Drop Path Rate} \\ \cline{2-5}
                    & \multicolumn{1}{l|}{Structural Params} & Content Params & \multicolumn{1}{l|}{$\lambda_0$} & Target Width     &           &     &      &     \\ \hline
ViT-Base            & \multicolumn{1}{l|}{N/A}               & 0.001          & \multicolumn{1}{l|}{N/A}       & N/A              & N/A       & 310 & 0.05 & 0.1 \\ \hline
ViT-Base Gaudi-GBLR & \multicolumn{1}{l|}{0.001}             & 0.001          & \multicolumn{1}{l|}{0.04}      & $0.12 \cdot 768$ & 1.0/100.0 & 310 & 0.05 & 0.0 \\ \hline
\end{tabular}%
}
\end{table}

\subsection{Extended Experimental Results}\label{sec:extended_exp_results}

\subsubsection{GBLR Matrix for Convolution Weights.}
We compare the ImageNet accuracy of ResNet18 \citep{he2016deep} with GBLR weight matrices to the Automatic Layer-wise Decomposition Selector (ALDS) \citep{liebenwein2021compressing}, a similar rank-based layer-wise compression method.
Following \citet{liebenwein2021compressing}, we consider the $(d, c, k_1, k_2)$-sized weight tensor of a convolution layer as $d \times (c k_1 k_2)$-sized matrix, where $d, c, k_1, k_2$ are the number of output channels, input channels, and the widths and heights of the kernels, respectively.
In this manner, we converted the weights of the convolution layers of the ResNet18 model to GBLR matrices, and retrained the model. 
Table \ref{tab:cnn_comparison} shows the accuracy and the relative FLOPs of the ResNet18 model with dense weights, low-rank weights found by one-shot ALDS, and GBLR weights found by our method.
Even without any supervision on the structural properties of the convolution kernels, GBLR  outperforms ALDS low-rank weights in terms of both accuracy and complexity. 

\begin{table}[H]
    \centering
    \caption{ImageNet Accuracy on ResNet18.}
    \label{tab:cnn_comparison}
    \begin{tabular}{|l|r|r|}
    \hline
    Weight Type         & \multicolumn{1}{l|}{Accuracy} & \multicolumn{1}{l|}{GFLOPs} \\ \hline
    Dense      & 69.62                              & 1.82                                     \\ \hline
    ALDS \citep{liebenwein2021compressing}         & 69.22                              & 1.03                                    \\ \hline
    GBLR (ours) & \textbf{69.31}                     & \textbf{1.01}                           \\ \hline
    \end{tabular}
    
\end{table}

\subsubsection{Generalization Gap of DNNs with GBLR weights}
In this section, we estimate the generalization gap of the DNNs with GBLR weights.
We used the ViT-Base model discussed in Section \ref{sec:analysis_on_gblr} which is trained on ImageNet from scratch.
The accuracy of the dense and GBLR models on the training and validation sets are evaluated and presented in Table \ref{tab:generalization_gap_vit}.
Notably, the training-validation accuracy difference of the GBLR model is smaller than that of the dense model.
The comparable validation accuracy despite the lower training accuracy provides reasonable empirical evidence that the DNNs with GBLR matrices may exhibit smaller generalization errors than dense models.

\begin{table}[H]
    \centering
    \caption{Generalization Gap of Dense and GBLR ViT-Base Models.}
    \label{tab:generalization_gap_vit}
    \resizebox{\columnwidth}{!}{
\begin{tabular}{|l|r|r|r|r|}
\hline
Weight Type     & Training Acc. (\%)  & Validation Acc. ($|\Delta|$) (\%) & Training Loss & Validation Loss ($|\Delta|$) \\ \hline
Dense      & 99.03 & 78.57 (20.46)  & 0.1640  & 1.0639 (0.8999)          \\ \hline
Gaudi-GBLR & 94.93 & 78.51 (16.42) & 0.3260  & 0.9917 (0.6657) \\ \hline
\end{tabular}}
\end{table}

\end{document}